\def\UrlFont{\rm}  
\newtheorem{theorem}{Theorem}[section]
\newtheorem*{theorem*}{Theorem}
\title{Sampling Control for Imbalanced Calibration in Semi-Supervised Learning}
\author {
    Senmao Tian,
    Xiang Wei,
    Shunli Zhang\thanks{Corresponding author.}
}
\begin{document}

\maketitle

\begin{abstract}
Class imbalance remains a critical challenge in semi-supervised learning (SSL), especially when distributional mismatches between labeled and unlabeled data lead to biased classification. Although existing methods address this issue by adjusting logits based on the estimated class distribution of unlabeled data, they often handle model imbalance in a coarse-grained manner, conflating data imbalance with bias arising from varying class-specific learning difficulties. To address this issue, we propose a unified framework, SC-SSL, which suppresses model bias through decoupled sampling control. During training, we identify the key variables for sampling control under ideal conditions. By introducing a classifier with explicit expansion capability and adaptively adjusting sampling probabilities across different data distributions, SC-SSL mitigates feature-level imbalance for minority classes. In the inference phase, we further analyze the weight imbalance of the linear classifier and apply post-hoc sampling control with an optimization bias vector to directly calibrate the logits. Extensive experiments across various benchmark datasets and distribution settings validate the consistency and state-of-the-art performance of SC-SSL. 
\end{abstract}

\begin{links}
    \link{Code}{https://github.com/Sheldon04/SC-SSL}
\end{links}

\section{Introduction}
\label{sec:intro}

Semi-supervised learning (SSL) \cite{Berthelot2019MixMatchAH, Miyato2017VirtualAT, Tarvainen2017WeightaveragedCT} is a powerful strategy aimed at enhancing the generalization capabilities of deep neural networks (DNNs) \cite{chen2024adaptive, chen2024dual, hong2023hqretouch, lin2025nighthaze} by leveraging limited labeled data, particularly in scenarios where labeled samples are scarce. The core of most SSL methods \cite{Sohn2020FixMatchSS, Li2020CoMatchSL, Berthelot2019ReMixMatchSL} lies in generating pseudo-labels for unlabeled data and selecting reliable labels for model training. However, real-world data often exhibits a long-tailed distribution \cite{Li2022NestedCL, Wang2020LongtailedRB, Xiang2020LearningFM}, leading models to primarily focus on common categories during training, which results in an imbalance of pseudo-labels. This phenomenon has given rise to class-imbalanced semi-supervised learning (CISSL). Traditional CISSL approaches \cite{Fan2021CoSSLCO, Lee2021ABCAB, Wei2021CReSTAC} typically assume that the class distributions of labeled and unlabeled data are consistent, a premise that is often overly idealistic. In practical applications, class distributions may be inconsistent or even unknown \cite{Lai2022SmoothedAW, Kim2020DistributionAR, Oh2021DASODS}, especially when continuously collecting new data or processing data from different tasks. Variations in the distribution of unlabeled data significantly affect the performance of CISSL methods. To address these challenges, recent approaches \cite{Wang2022ImbalancedSL, Zheng2024BEMBA, Li2023TwiceCB, Lee2024CDMADCD} have focused on adapting to unknown and mismatched class distributions. Techniques such as ACR \cite{Wei2023TowardsRL}, CPE \cite{Ma2023ThreeHA}, and SimPro \cite{Du2024SimProAS} have been proposed to estimate the distance between distributions and train classifiers that adapt to specific or arbitrary distributions. However, due to the inherent limitations in the accuracy of classifiers, the adjusted pseudo-labels may not accurately reflect the true distribution of unlabeled data. Current methods \cite{Wei2023TowardsRL, Ma2023ThreeHA, Du2024SimProAS} improve model generalization by retaining a small portion of high-quality pseudo-labels to avoid confirmation bias \cite{Wang2022FreeMatchST}, resulting in a significant amount of unlabeled data not effectively contributing to training. Although some methods \cite{Guo2022ClassImbalancedSL, Yu2023InPLPT} use adaptive thresholds to adjust the class probabilities for pseudo-labeling, these methods rely on the model’s prediction confidence. These methods overlook the importance of addressing imbalance from a finer-grained perspective, which results in feature-level model bias and renders logit adjustment ineffective in targeting the root cause. As a result, these methods still yield suboptimal performance.

\begin{figure}[!t]
    \centering
    \includegraphics[width=7.5cm]{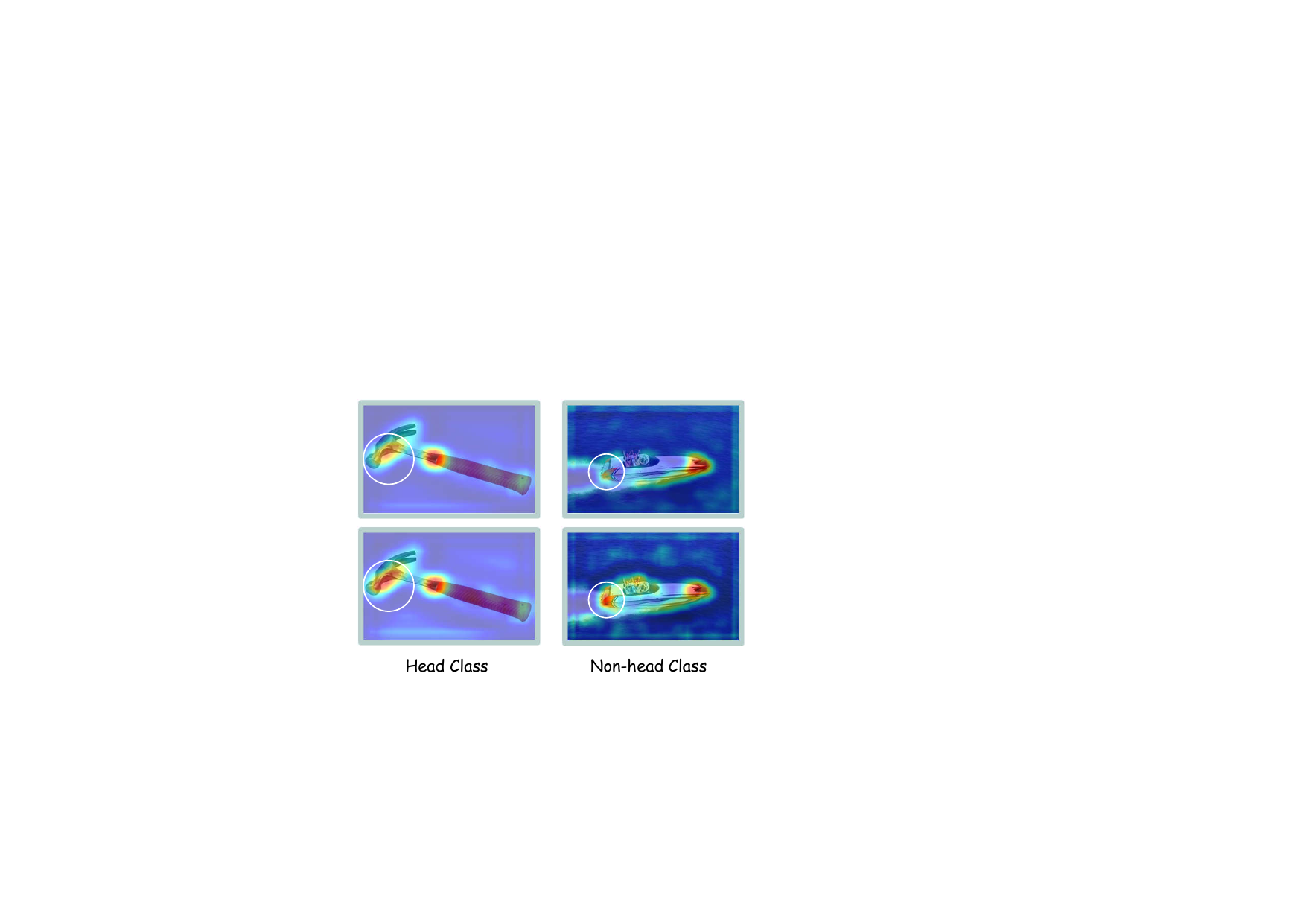}
    \setlength{\abovecaptionskip}{6pt}
    \setlength{\belowcaptionskip}{0pt}
    \caption{We select one image from a head class and one from a non-head class, and visualize their features using Grad-CAM \cite{Selvaraju2016GradCAMVE}. The top row shows the results from training with uniform sampling, while the bottom row shows the results from SC-SSL. It can be observed that the attention regions for the non-head class are significantly improved, capturing more relevant features. To enhance visual clarity, we applied upsampling.}
    \label{fig:viscam}
\end{figure}

To address this issue, we propose a novel method called SC-SSL (Sampling Control for Imbalanced Semi-supervised Learning), which suppresses model bias through decoupled sampling control.   Starting from known information, we divide the labeled data into head and non-head classes based on their sample sizes.   During training, instead of solely focusing on improving classification accuracy, SC-SSL emphasizes whether the learned features of non-head classes are balanced.

Without altering the training objective (i.e., the classification loss), we introduce an expansive classifier that focuses on feature learning for non-head classes.   We incorporate the expansion–separation assumption and leverage the inherent denoising property of consistency regularization, allowing even noisy pseudo-labels to be used effectively.   This balances the gradient contributions between head and non-head classes during training.
To improve the sampling probability of non-head classes, we theoretically identify key control factors and associate them with the expansion factor in the assumption.   This enables effective sampling regulation across various data distributions, allowing SC-SSL to mitigate the effects of data imbalance and suppress feature-level bias as \cref{fig:viscam} shows.

However, the supervised losses of the three classifiers are influenced by their respective sampling factors, leading to additional imbalance. This imbalance arises from both data distribution and optimization dynamics. As a result, an imbalance remains in the linear layers due to optimization.   At inference time, since feature imbalance has been alleviated, we can directly correct the optimization-induced imbalance in the classifier.
We observe that the bias term of the linear layer reflects a combination of optimization imbalance and data imbalance.   Under random sampling, head classes tend to have higher bias values;   under the expansive classifier's sampling, tail classes exhibit higher bias values;   whereas the output classifier—trained with balanced sampling for inference—has bias terms that exclude data imbalance, thus isolating the optimization bias.
Unlike the classifier weights, which require interaction with feature vectors and are harder to isolate, the bias term serves as a clean proxy for optimization imbalance.   Therefore, we treat it as an optimization bias vector to calibrate the final logits, producing an overall unbiased prediction.

Additionally, SC-SSL also leverages this bias estimation prior to training to approximate the distribution of the unlabeled data.   The resulting estimation is then used to initialize pseudo-label sampling strategies for both the balanced and expansive classifiers.

We summarize our contributions as follows: 
\begin{itemize}
\item We propose SC-SSL, a unified framework that mitigates both feature-level bias and logits-level bias in imbalanced settings through decoupled sampling control with an expansion classifier guided by the expansion–separation assumption.
\item We theoretically analyze the key factors for controlling pseudo-label sampling under class imbalance, and dynamically adjust the sampling probabilities during training.
\item Our proposed method achieves state-of-the-art performance on datasets such as CIFAR10-LT, CIFAR100-LT, STL10-LT, and ImageNet-127, across various distributions of unlabeled data.
\end{itemize}

\begin{figure*}[!t]
    \centering
    \includegraphics[width=15.0cm]{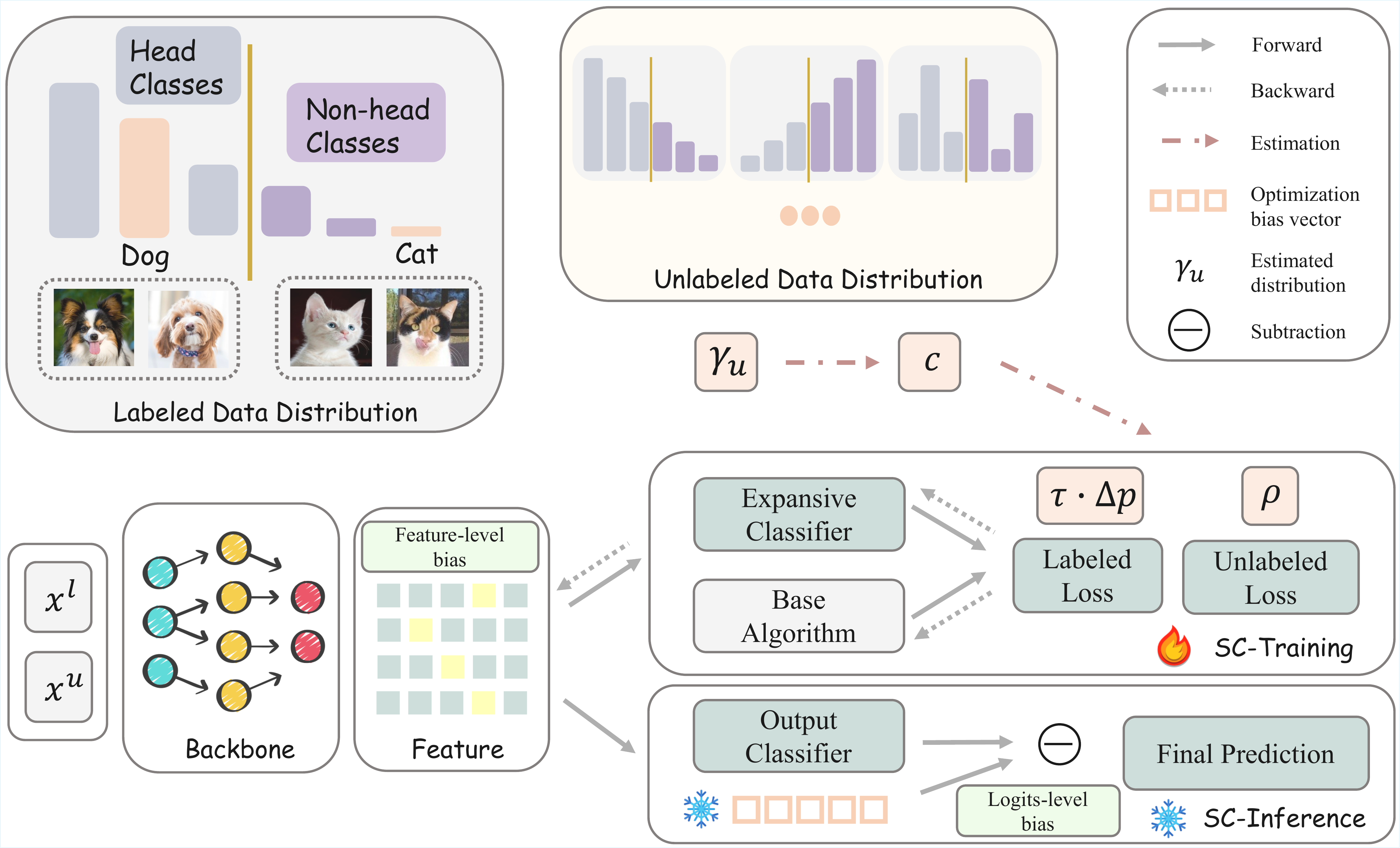}
    \setlength{\abovecaptionskip}{6pt}
    \setlength{\belowcaptionskip}{0pt}
    \caption{Illustration of the proposed framework. Details of the three key factors \(\gamma_u\), \(\Delta p\), and \(\rho\) can be found in \cref{eq:prob}. The utilization of these factors can be found in the Method section.}
    \label{fig:2}
\end{figure*}

\section{Related Work}
\label{sec:related}

\paragraph{Semi-supervised Learning} In the realm of SSL, a subset of algorithms has risen to prominence by leveraging unlabeled data to enhance model performance. These algorithms operate on the principle of generating pseudo-labels, thereby establishing a self-training mechanism that refines the model iteratively. Advanced SSL strategies have taken this a step further by merging pseudo labeling with consistency regularization. This fusion promotes consistent predictions for the same image across different presentations, enhancing the resilience of DNNs. Among these, FixMatch \cite{Sohn2020FixMatchSS} and ReMixMatch \cite{Berthelot2019ReMixMatchSL} stand out as particularly effective methods, achieving remarkable success in image recognition tasks and surpassing other SSL techniques in performance.

\paragraph{Class Imbalanced Semi-supervised Learning} CISSL has garnered considerable interest due to its effectiveness in various real-world applications. CReST \cite{Wei2021CReSTAC} emphasizes the importance of leveraging unlabeled instances from less common classes during its iterative self-training phase. Meanwhile, ABC \cite{Lee2021ABCAB} and CoSSL \cite{Fan2021CoSSLCO} introduce an additional classifier to prevent bias in the training process towards any specific class. Although these methods have shown significant performance enhancements, they generally operate under the assumption that the class distributions for both labeled and unlabeled data are the same. In real-world scenarios, the distribution of unlabeled data may be unknown and mismatched. To tackle the challenges of realistic CISSL, DASO \cite{Oh2021DASODS} innovatively adjusts the ratio of linear and semantic pseudo-labels based on the unknown class distribution of unlabeled data. Conversely, ACR \cite{Wei2023TowardsRL} seeks to improve consistency regularization by establishing predefined distribution anchors, yielding encouraging outcomes. Following ACR, SimPro \cite{Du2024SimProAS} takes a step further to model arbitrary unlabeled data distributions using probability theory. CPE \cite{Ma2023ThreeHA} takes a different approach by training multiple experts, with each expert focusing on modeling a specific distribution. When generating pseudo-labels, logit adjustments are made by ACR and SimPro based on the estimated distribution, which makes the quality of the pseudo-labels heavily dependent on the accuracy of the estimates. 

\section{Method}
\label{sec:method}

\subsection{Preliminaries}
\paragraph{Problem Setup} In the domain of CISSL, which is also referred to as long-tailed semi-supervised learning, we deal with a scenario that includes a labeled training dataset $\mathcal{D}_l = \{(x_i^{l},y_i^{l})\}_{i=1}^N$ and an unlabeled training dataset $\mathcal{D}_u=\{x_j^{u}\}_{j=1}^M$. Both datasets share the same set of classes. Each data point $x$ is associated with a class label $y_i\in\{0,1\}^K$. For the labeled dataset $\mathcal{D}_l$, the number of samples for the $k$-th class is denoted by $N_k$, and for the unlabeled dataset $\mathcal{D}_u$, it is denoted by $M_k$. The number of samples in the most frequent class in the labeled and unlabeled datasets are $N$ and $M$, respectively. Given that the distribution of classes in $\mathcal{D}_l$ is typically imbalanced, classes are arranged in descending order, resulting in $N_1 > N_2 > \ldots > N_K$. Here, the head classes are defined as those in the first half, while the non-head classes are those in the latter half. The imbalance ratio for the labeled dataset is defined as $\gamma_l = \frac{N_1}{N_K}$. The class distribution for the unlabeled dataset $\mathcal{D}_u$ remains unknown due to the absence of labels. Although we do not need to make assumptions about the distribution of unlabeled data, we still utilize predefined anchor distributions for a better comparison with previous methods. We consider two general types of distributions: long-tail and Gaussian.
For long-tail settings, we adopt consistent, uniform, and inverse distributions as the pre-defined anchors with an imbalance ratio \(\gamma_u\). In the inverse setting, the sample count for each class is sorted in ascending order, meaning it is the opposite of the labeled data. For Gaussian scenes, we have normal and inverse settings. The mean and variance of the Gaussian distribution are set to $(K-1)/2$ and $K/6$, respectively.

\paragraph{Notations}
$\mathcal{D}_u$ is the set of samples drawn from distribution $P$, reflecting the characteristics of the data in the input space $\mathcal{X}$.
We let \(B:\mathcal{X} \to \mathbb{R}^\mathcal{Q}\) denote a learned backbone mapping function (e.g., the continuous features output by a neural network), \(F:\mathbb{R}^\mathcal{Q} \to [K]\) the scoring function from features to logits, \(G:\mathcal{X} \to [K]\) the discrete labels from \(G(x) \triangleq argmax_i (\sigma  (F(B(x))_i )) \) and \(\sigma (\cdot)\) denotes the Softmax function. For different classification heads, \(F_b\) denotes the output classifier and \(F_e\) denotes the expansive classifier, respectively. They share the backbone \(B\).

\subsection{Motivation}
When the class distributions of labeled and unlabeled datasets are misaligned, many existing approaches~\cite{Wei2023TowardsRL, Du2024SimProAS, Ma2023ThreeHA} adopt conservative strategies, avoiding the use of pseudo-labels for tail classes due to concerns about noise and label errors. However, recent theoretical perspectives on self-training suggest that consistency regularization can still extract meaningful supervision from noisy pseudo-labels, as long as certain structural assumptions about the data distribution hold.

Specifically, the expansion assumption implies that even if only a small fraction of samples from a class are confidently predicted, their surrounding regions in the feature space are likely to contain samples from the same class. This implies that well-initialized class-conditional feature clusters can propagate label information to nearby unlabeled samples, thereby enhancing the balance of feature representations across classes. Formally, the class-conditional distribution \(P_i\) is said to satisfy an \((a, c)\)-expansion if for any subset \(V \subseteq \mathcal{X}\) with \(P_i(V) \leq a\), the following holds:
\begin{equation}
P_i(\mathcal{N}(V)) \geq \min\{c P_i(V), 1\},
\end{equation}
where \( \mathcal{N}(V) \) denotes a neighborhood around \(V\) in feature space.

Complementary to this, the separation assumption ensures local consistency under input perturbation. It requires that augmented versions of the same sample are predicted consistently:
\begin{align}
R_\mathcal{B}(G) & \triangleq \mathbb{E}_P\left[\mathbb{I}\left(\exists x^{\prime} \in \mathcal{A}(x) \text{ s.t. } G(x^{\prime}) \neq G(x)\right)\right] \notag \\
& \leq \mu,
\label{eq:sep}
\end{align}
where \( \mathcal{A}(x) \) defines the augmentation neighborhood of \(x\), and \( \mu \) reflects the classifier’s robustness to such perturbations.

Under these assumptions, the classifier's generalization error is upper-bounded by the quality of the pseudo-label generator and its robustness to augmentations. Specifically, the pseudo-label denoising theorem states that the error of the classifier \( \widehat{G} \) satisfies:
\begin{equation}
\mathrm{Err}(\widehat{G}) \leq \frac{2c}{c - 3} \mu,
\end{equation}
where \(c > 3\) denotes the expansion factor. The detailed derivation can be found in the Appendix of~\cite{Wei2020TheoreticalAO}. We adopt the FixMatch~\cite{Sohn2020FixMatchSS} setting, using \( \widehat{G} \) directly as the pseudo-labeler. This implies that even if pseudo-labels are noisy, as long as the model maintains low perturbation sensitivity (small \( \mu \)) and high feature-local consistency (large \( c \)), it can still facilitate class-balanced feature learning, especially for tail classes.

\subsection{The Design of SC-SSL}

\paragraph{Sampling Control for Training} 
In cases of long-tailed distributions of labeled data, the classifier's predictions tend to be biased towards the head classes, which leads to an imbalance in the predictions of pseudo-labels. This results in a misalignment between directly using thresholds to adjust sampling probabilities and prediction biases, leading to the occurrence of confirmation bias. Fortunately, prior work has proposed using logit adjustment-based balanced loss \cite{Menon2020LongtailLV, Wei2023TowardsRL, Du2024SimProAS, Ma2023ThreeHA} to reduce the bias of classifiers. Building on this insight and some explorations from previous studies \cite{Wang2022FreeMatchST, Yang2020RethinkingTV}, we utilize a simplified binary classification problem to further investigate the sampling probabilities of pseudo-labels. Given the imbalanced unlabeled data, we assume that the probability of the label \( Y \) being positive (+1) is \(\gamma \in \left(\frac{1}{2}, 1\right) \), while the probability of it being negative (-1) is \(1 - \gamma\). The input \( X \) has the following conditional distributions:
\[
X \mid Y = -1 \sim \mathcal{N}(\mu_1, \sigma_1^2), \quad X \mid Y = +1 \sim \mathcal{N}(\mu_2, \sigma_2^2).
\]
We assume \( \mu_2 > \mu_1 \) without loss of generality. Since we have already balanced the labeled loss and the adjusted input can be viewed as \( (\mathcal{x} + \Delta p)\) (\(\Delta p\) is set to the natural log of the label frequency of the positive class), we can roughly estimate the prediction of the pseudo-labels as \( (\mathcal{x} - \Delta p) \). Suppose the classifier outputs a confidence score defined as:
\(
s(\mathcal{x}) = 1 / (1 + \exp\left(-\beta\left( (\mathcal{x}-\Delta p) - \frac{\mu_1 + \mu_2}{2}\right)\right)),
\)
where \( \beta \) is a positive parameter reflecting the model's learning status and is expected to increase during training as the model becomes more confident gradually, and \( \frac{\mu_1 + \mu_2}{2} \) represents the Bayes' optimal linear decision boundary. We consider a scenario where a fixed threshold \( \rho \in \left(\frac{1}{2}, 1\right) \) is used to generate pseudo labels. A sample \( \mathcal{x} \) is assigned a pseudo label of +1 if \( s(\mathcal{x}) > \rho \) and -1 if \( s(\mathcal{x}) < 1 - \rho \). The pseudo label is set to 0 (masked) if \( 1 - \rho \leq s(\mathcal{x}) \leq \rho \). 
\begin{theorem}
\label{theorem:1}
The pseudo label \(Y_{psl}\) has the following probability distribution:
\begin{equation}\begin{aligned}
P(Y_{psl}=1)& =\gamma \Phi(\frac{\frac{\mu_{2}-\mu_{1}}{2}-\frac{1}{\beta}\log(\frac{\rho}{1-\rho}) -\Delta p}{\sigma_{2}}) \\ & \quad + (1-\gamma)\Phi(\frac{\frac{\mu_{1}-\mu_{2}}{2}-\frac{1}{\beta}\log(\frac{\rho}{1-\rho}) -\Delta p }{\sigma_{1}}), \\
P(Y_{psl}=-1)& =(1-\gamma)\Phi(\frac{\frac{\mu_2-\mu_1}2-\frac1\beta\log(\frac\rho{1-\rho})+\Delta p}{\sigma_1}) \\ & \quad +  \gamma\Phi(\frac{\frac{\mu_1-\mu_2}2-\frac1\beta\log(\frac\rho{1-\rho})+ \Delta p}{\sigma_2}), \\
P(Y_{psl}=0)& =1-P(Y_{psl}=1)-P(Y_{psl}=-1), 
\label{eq:prob}
\end{aligned}\end{equation}
where \(\Phi\) is the cumulative distribution function of a standard normal distribution.
\end{theorem}
The proof can be found in the Appendix. Then we can derive the following implications:
\begin{itemize}
\item The sampling probability of pseudo-labels is primarily influenced by \(\gamma\). Given that \( \mu_2 > \mu_1 \), the sampling probability for the head classes in the labeled data is naturally greater than that for the non-head classes, assuming other factors are not considered.

\item By controlling the logit adjustment amount \(\Delta p\), it is possible to mitigate or even reverse the extent to which the sampling probabilities are affected by the inherent imbalance in the data.

\item The choice of confidence also impacts the sampling probabilities; however, it is essential to consider that lower sampling probabilities can lead to confirmation bias as \(\beta\) gradually increases.
\end{itemize}

In dual-classifier settings~\cite{Lee2021ABCAB, Wei2023TowardsRL}, the original classifier is typically kept unchanged, while the second classifier is usually introduced to produce balanced output logits. However, this setup prevents us from adjusting the pseudo-label sampling probabilities of either classifier under the separation assumption.
On one hand, for the balanced classifier, balanced logits imply that the pseudo-labels sampled are likely to be balanced as well—meaning they approximately match the true distribution of pseudo-labels. In this true distribution, the proportion of non-head classes may be high or low. When the proportion is low, the sampling probability is essentially governed by the data imbalance factor \(\gamma\), meaning head classes are still sampled with high probability. Since this classifier is responsible for producing the final outputs and optimizing classification accuracy, it becomes difficult to make significant adjustments to \(\Delta p\), the change in sampling probability. As a result, increasing the sampling rate of non-head classes becomes challenging.
On the other hand, for the original classifier, since \(\Delta p\) cannot be modified directly, the only option is to increase the sampling probability of non-head classes by adjusting the confidence threshold \(\rho\). However, due to the classifier's inherent bias toward head classes, strong data augmentations tend to obscure class-discriminative features, causing the prediction \(G(x')\) to favor head classes. This violates the separation assumption defined in Eq.~\eqref{eq:sep}.

In summary, neither the output classifier nor the original classifier is suitable for significantly adjusting pseudo-label sampling. Therefore, we introduce an additional \textit{expansive classifier} that shares the same backbone but allows direct control over the sampling probability of non-head classes. 

\begin{figure}[!t]
    \centering
    \includegraphics[width=7.5cm]{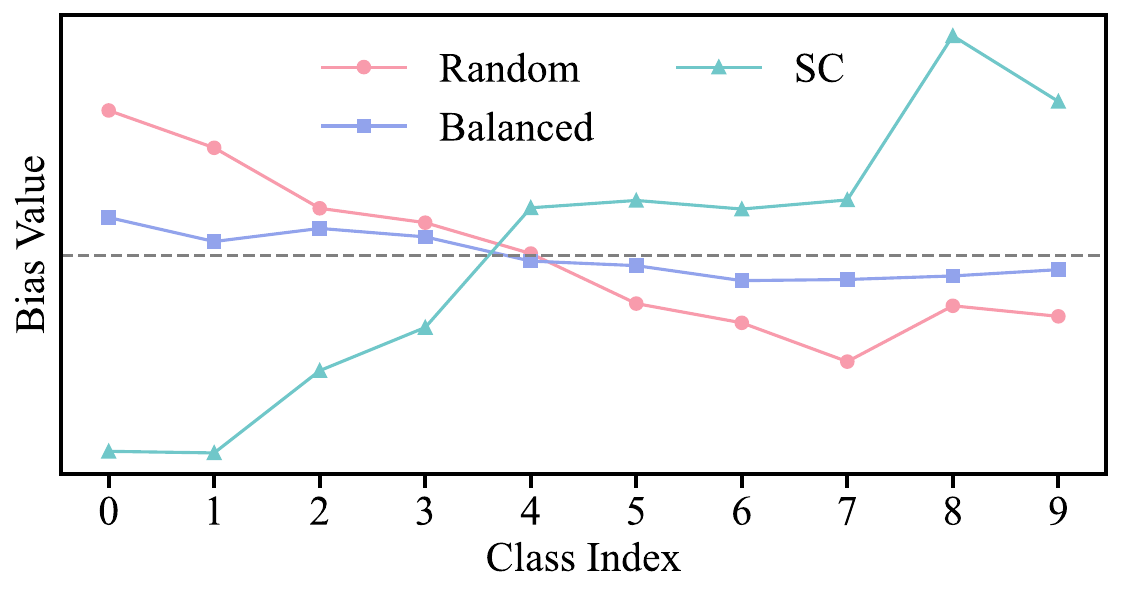}
    \setlength{\abovecaptionskip}{6pt}
    \setlength{\belowcaptionskip}{0pt}
    \caption{We choose the distribution setting of Consist to illustrate the pattern of the linear layer's bias term. Under random sampling, since the data follows a long-tailed distribution, the bias term tends to favor head classes. In contrast, under SC-SSL's controlled expansive sampling, the pseudo-label sampling probability is deliberately skewed toward minority classes, resulting in the opposite trend. Therefore, under balanced sampling, the bias term reflects the effect of optimization imbalance.}
    \label{fig:viscls}
\end{figure}

\paragraph{Sampling Control for Inference}
At inference time, despite the alleviation of feature imbalance through our expansive classifier during training, an optimization-induced imbalance still persists within the linear classifier parameters. Formally, a classifier can be expressed as:
\[
F(B(x)) = \mathbf{W} B(x) + \mathbf{b},
\]
where $\mathbf{W}$ represents the weight matrix, $B(x)$ is the feature vector produced by the backbone network, and $\mathbf{b}$ is the bias term. Since supervised losses of all linear classifiers are optimized using labeled data and traditional cross-entropy, the inherent class imbalance within labeled data inevitably impacts the bias term $\mathbf{b}$. We observe that these bias terms encapsulate two distinct sources of imbalance: data distribution imbalance and optimization-induced imbalance, as \cref{fig:viscls} shows. Under a random sampling regime, head classes typically exhibit higher bias values, reflecting data imbalance. Conversely, under expansive sampling during training, tail classes display elevated bias values, suggesting an overcorrection that inversely mirrors the original data imbalance.

However, the output classifier, trained using a balanced sampling strategy intended for inference, contains bias terms largely independent of data imbalance, thus isolating pure optimization-induced bias. Unlike classifier weights $\mathbf{W}$, which involve complex interactions with feature vectors and are challenging to decouple clearly, the bias term $\mathbf{b}$ provides a straightforward proxy for optimization imbalance.

Therefore, we define this bias term explicitly as an \textit{optimization bias vector}, denoted as $\mathbf{b}_{opt}$. At inference, we leverage this vector to calibrate final logits as follows:
\begin{equation}
\tilde{F}(B(x)) = {F_b}(B(x)) - \mathbf{b}_{opt} = W_b(B(x)),
\end{equation}
where $\tilde{F}(B(x))$ represents the bias-corrected logits. This simple yet effective adjustment ensures that the predictions are unbiased with respect to the optimization-induced imbalance, enhancing overall classification performance.

Additionally, we incorporate the $\mathbf{b}_{opt}$ as a prior to approximate the underlying distribution of unlabeled data. After several epochs of estimation training and performing inference on the unlabeled dataset, we can roughly determine the quantity of each class \(N^e = \{N^e_1, \ldots, N^e_k\}\) for \(k=1,..., K\) according to the adjusted balanced output. Let there be \(O\) predefined distributions \(P^{(1)}, P^{(2)}, \ldots, P^{(o)}\), where each distribution \(P^{(o)} = \{p_k^{(o)}\}_{k=1}^K\) satisfies \(\sum_{k=1}^K p_k^{(o)} = 1\) for \(o = 1, \ldots, O\). These distributions are then re-scaled based on the counts of \(N_e\):
\[
Q_k^{(o)} = \frac{p_k^{(o)} \cdot sum(N^e)}{\sum_{i=1}^K p_i^{(o)}}.
\]
The closest distribution can be identified as (\(D_{KL}\) measures KL divergence):
\[
o^* = \arg\min_{o} D_{KL}(N^e, Q^{(o)}).
\]

\begin{table*}[!t]
  \centering
  \small 
  \begin{tabular}{@{}c|cccc|cccc@{}}
    \toprule
    Dataset & \multicolumn{4}{c|}{CIFAR10-LT} & \multicolumn{4}{c}{CIFAR100-LT} \\
    \cmidrule(lr){1-1} \cmidrule(lr){2-5} \cmidrule(lr){6-9}
    Distribution & \multicolumn{2}{c}{Consist} & \multicolumn{2}{c|}{Inverse} & Consist & Consist & Inverse & Inverse \\
    \(\gamma_l \text{-} \gamma_u\) & \multicolumn{4}{c|}{100-100} & 10-10 & 15-15 & 10-10 & 15-15 \\
    \midrule
    \(N \text{-} M\) & 1500-3000 & 500-4000 & 1500-3000 & 500-4000 & \multicolumn{4}{c}{150-300} \\
    \midrule
    Supervised & 64.25 {\tiny (±0.43)} & 49.55 {\tiny (±0.92)} & 64.25 {\tiny (±0.43)} & 49.55 {\tiny (±0.92)} & 47.92 {\tiny (±0.55)} & 45.65 {\tiny (±0.63)} & 48.25 {\tiny (±0.39)} & 46.23 {\tiny (±0.50)} \\
    \midrule
    FixMatch+LA & 81.49 {\tiny (±0.77)} & 75.26 {\tiny (±1.58)} & 80.68 {\tiny (±0.54)} & 73.23 {\tiny (±1.01)} & 58.56 {\tiny (±0.19)} & 55.26 {\tiny (±0.05)} & 58.21 {\tiny (±0.24)} & 57.93 {\tiny (±0.69)} \\
    w/ DARP & 80.41 {\tiny (±0.63)} & 76.93 {\tiny (±0.52)} & 81.19 {\tiny (±0.37)} & 74.05 {\tiny (±1.28)} & 59.94 {\tiny (±0.33)} & 55.71 {\tiny (±0.42)} & 59.88 {\tiny (±0.39)} & 56.04 {\tiny (±0.25)} \\
    w/ CReST & 79.45 {\tiny (±0.15)} & 77.24 {\tiny (±0.35)} & 85.92 {\tiny (±0.76)} & 78.52 {\tiny (±0.90)} & 58.78 {\tiny (±0.64)} & 56.20 {\tiny (±0.13)} & 60.44 {\tiny (±0.09)} & 56.40 {\tiny (±0.17)} \\
    \midrule
    FixMatch+CDMAD & 83.33 {\tiny (±0.52)} & 79.10 {\tiny (±1.32)} & 77.76 {\tiny (±0.56)} & 74.65 {\tiny (±1.88)} & 59.31 {\tiny (±0.42)} & 56.05 {\tiny (±0.23)} & 58.72 {\tiny (±0.45)} & 55.46 {\tiny (±0.30)} \\
    w/ ABC & \underline{84.56} {\tiny (±0.12)} & 80.09 {\tiny (±0.31)} & 83.55 {\tiny (±0.08)} & 79.46 {\tiny (±0.29)} & 58.95 {\tiny (±0.33)} & 56.09 {\tiny (±0.26)} & 60.23 {\tiny (±0.18)} & 57.49 {\tiny (±0.04)} \\
    w/ DASO & 79.56 {\tiny (±0.98)} & 75.24 {\tiny (±0.66)} & 75.01 {\tiny (±0.12)} & 68.88 {\tiny (±0.35)} & 58.72 {\tiny (±0.14)} & 56.39 {\tiny (±0.28)} & 61.06 {\tiny (±0.29)} & 58.12 {\tiny (±0.17)} \\
    \midrule
    FixMatch & 76.49 {\tiny (±0.72)} & 73.14 {\tiny (±0.13)} & 68.92 {\tiny (±0.79)} & 62.52 {\tiny (±0.93)} & 57.61 {\tiny (±0.44)} & 54.02 {\tiny (±0.14)} & 57.10 {\tiny (±0.31)} & 53.88 {\tiny (±0.25)} \\
    w/ ABC & 82.69 {\tiny (±0.64)} & 79.96 {\tiny (±0.09)} & 83.22 {\tiny (±0.47)} & 79.26 {\tiny (±0.35)} & 58.30 {\tiny (±0.24)} & 55.67 {\tiny (±0.09)} & 59.24 {\tiny (±0.40)} & 56.65 {\tiny (±0.33)} \\
    w/ DASO & 78.68 {\tiny (±0.59)} & 73.62 {\tiny (±0.40)} & 74.52 {\tiny (±0.61)} & 67.59 {\tiny (±1.50)} & 58.16 {\tiny (±0.45)} & 54.92 {\tiny (±0.10)} & 59.25 {\tiny (±0.19)} & 55.38 {\tiny (±0.14)} \\
    w/ ACR & 84.10 {\tiny (±0.39)} & \underline{81.52} {\tiny (±0.24)} & \underline{89.46} {\tiny (±0.42)} & \underline{84.88} {\tiny (±0.16)} & \underline{60.34} {\tiny (±0.66)} & \underline{57.46} {\tiny (±0.32)} & \underline{61.79} {\tiny (±0.43)} & 58.53 {\tiny (±0.51)} \\
    w/ CPE & 84.46 {\tiny (±0.20)} & 80.89 {\tiny (±0.09)} & 87.10 {\tiny (±0.21)} & 83.76 {\tiny (±0.32)} & 59.83 {\tiny (±0.29)} & 57.00 {\tiny (±0.51)} & 60.83 {\tiny (±0.30)} & \underline{58.54} {\tiny (±0.08)} \\
    w/ SC-SSL (Ours) & \textbf{86.53} {\tiny (±0.16)} & \textbf{83.89} {\tiny (±0.35)} & \textbf{89.97} {\tiny (±0.20)} & \textbf{86.02} {\tiny (±0.09)} & \textbf{60.65} {\tiny (±0.14)} & \textbf{57.88} {\tiny (±0.26)} & \textbf{62.99} {\tiny (±0.42)} & \textbf{60.27} {\tiny (±0.14)} \\
    \bottomrule
  \end{tabular}
    \caption{Test accuracy of previous CISSL algorithms and our proposed SC-SSL under consistent and inverse distributions with different numbers of training samples on CIFAR10-LT and CIFAR100-LT benchmarks. The network architecture is WRN-28-2 trained from scratch. We highlight the best number in \textbf{bold} and the second best is \underline{underlined}.}
  \label{tab:1}
\end{table*}

\begin{table*}[!t]
  \centering
  \small 
  \begin{tabular}{@{}c|cccccc|cc@{}}
    \toprule
    Dataset & \multicolumn{6}{c|}{CIFAR10-LT} & \multicolumn{2}{c}{STL10-LT} \\
    \cmidrule(lr){1-1} \cmidrule(lr){2-7} \cmidrule(lr){8-9}
    Distribution & \multicolumn{4}{c}{Uniform} & Gaussian & Gaussian-I & Unknown & Unknown \\
    \(\gamma_l \text{-} \gamma_u\)  & \multicolumn{4}{c}{100-1} & 100- & 100- & 10- & 20- \\
    \midrule
    \(N \text{-} M\) & 1500-3000 & 1500-300 & 500-4000 & 500-400 & \multicolumn{2}{c}{1500-3000} & \multicolumn{2}{|c}{150-} \\
    \midrule
    FixMatch & 81.51 {\tiny (±1.15)} & 73.27 {\tiny (±0.99)} & 73.01 {\tiny (±3.81)} & 66.47 {\tiny (±0.84)} & 76.25 {\tiny (±1.29)} & 74.74 {\tiny (±2.03)} & 66.67 {\tiny (±1.89)} & 55.99 {\tiny (±3.84)} \\
    w/ ABC & 87.89 {\tiny (±1.24)} & 81.89 {\tiny (±0.32)} & 86.92 {\tiny (±0.68)} & 78.26 {\tiny (±0.60)} & 86.15 {\tiny (±0.45)} & 85.78 {\tiny (±0.39)} & 71.12 {\tiny (±1.36)} & 66.23 {\tiny (±2.14)} \\
    w/ CDMAD & 90.79 {\tiny (±0.43)} & 82.12 {\tiny (±0.74)} & 87.11 {\tiny (±0.73)} & 80.65 {\tiny (±0.78)} & 87.64 {\tiny (±0.36)} & 86.25 {\tiny (±0.67)} & 71.66 {\tiny (±0.22)} & 66.84 {\tiny (±0.37)} \\
    w/ ACR & 93.52 {\tiny (±0.11)} & 84.61 {\tiny (±0.50)} & 92.13 {\tiny (±0.15)} & 80.10 {\tiny (±1.21)} & \underline{89.65} {\tiny (±0.68)} & \underline{90.00} {\tiny (±0.76)} & \underline{76.94} {\tiny (±0.35)} & \underline{74.53} {\tiny (±0.83)} \\
    w/ CPE & \textbf{93.81} {\tiny (±0.14)} & \underline{85.86} {\tiny (±0.40)} & \underline{92.29} {\tiny (±0.35)} & \underline{82.32} {\tiny (±0.43)} & 89.26 {\tiny (±0.12)} & 88.01 {\tiny (±0.08)} & 73.07 {\tiny (±0.47)} & 69.60 {\tiny (±0.20)} \\
    w/ SC-SSL (Ours) & \underline{93.79} {\tiny (±0.22)} & \textbf{86.45} {\tiny (±0.02)} & \textbf{93.33} {\tiny (±0.19)} & \textbf{83.11} {\tiny (±0.17)} & \textbf{90.84} {\tiny (±0.29)} & \textbf{91.25} {\tiny (±0.34)} & \textbf{79.26} {\tiny (±0.31)} & \textbf{77.11} {\tiny (±0.35)} \\
    \bottomrule
  \end{tabular}
    \caption{Test accuracy of recent CISSL algorithms and our proposed SC-SSL under uniform, Gaussian, and unknown distributions on CIFAR10-LT and STL10-LT benchmarks.}
  \label{tab:2}
\end{table*}

\subsection{The Loss Design}
Building on the FixMatch algorithm \cite{Sohn2020FixMatchSS}, the supervised loss with balanced softmax for each classifier can be formulated as:
\begin{equation}
\mathcal{L}_{sup}(\tau, F) = \frac{1}{B_{l}} \sum_{i=1}^{B_{l}} \ell_{CE}\left( F\big(B({x}_{i}^{l})\big) + \tau \cdot \Delta p_{y_i}, y_{i} \right),
\end{equation}
where \(\Delta p_{y_i}\) is interpreted as the label frequency of class \(y_i\), \(\tau\) is a hyperparameter and \(\ell_{CE}\) denotes the cross entropy loss.
For unlabeled data, the consistency loss is:
\begin{equation}
\begin{gathered}
\mathcal{L}_{con}(\rho, F) = \frac{1}{B_{u}} \sum_{j=1}^{B_{u}} \ell_{CE}\left( F\big(B(\mathcal{A}(x_{j}^{u}))\big) , \Tilde{y}_{j} \right) \cdot \mathcal{M}(x_{j}^{u}, \rho), \\
\mathcal{M}(x_{j}^{u}, \rho)=\mathbb{I}\left(\operatorname*{max}\left(F(B(x_{j}^{u}))\right)\geq\rho\right),
\end{gathered}
\end{equation}
where \(\Tilde{y}_{j}\) denotes the predicted pseudo-labels and \(\rho\) is initialized from an estimated distribution, which is dynamically adjusted at each iteration \(t\) by:
\begin{equation}
    \rho^t(k) = \rho^{t-1}(k) - \alpha \cdot \mathbb{I}(\mathbf{b}_{opt}(k) > \nu),
\label{eq:dy}
\end{equation}
where \(\alpha\) and \(\nu\) are hyperparameters. \(\rho^0\) is initialized based on the expansion factor \(c\).

The assumed expansion factor \(c\) is determined by \(o^*\). Generally, the greater the relative quantity of non-head classes in the unlabeled data, the larger the expansion factor \(c\). Keeping \(\rho_b(\text{head})\) and \(\rho_e(\text{head})\) constant, we only need to initialize \(\rho_b(\text{non-head})\) and \(\rho_e(\text{non-head})\) based on \(c\):
\begin{align}
    \rho_b^0(\text{non-head}) &= \rho_{\text{max}} - \frac{c - 4}{10} \cdot min\{\gamma_u/50, 1\}, \\
    \rho_e^0(\text{non-head}) &= \rho_{\text{max}} - \frac{c - 3}{5} \cdot min\{\gamma_u/20, 1\}.
\end{align}
where \(\rho_\text{max}\) is a constant representing the maximum confidence level and \(\gamma_u\) is the imbalance ratio according to \(o^*\).

The overall training objective function is formulated as:
\begin{equation}
\begin{aligned}
    \mathcal{L} &= \mathcal{L}_{basic} \\
                &+ \mathcal{L}_{sup}^{b}(\tau_b, F_b) + \mathcal{L}_{con}^{b}(\rho_b, F_b) \\
                &+ \mathcal{L}_{sup}^{e}(\tau_e, F_e) + \mathcal{L}_{con}^{e}(\rho_e, F_e),
\end{aligned}
\end{equation}
where \(\mathcal{L}_{basic}\) is the loss of base SSL algorithms, \(\tau_b\) and \(\tau_e\) are hyperparameters for output classifier and expansive classifier respectively.

\section{Experiments}
\label{sec:exp}

\subsection{Analysis}

\subsection{Experimental Settings}
\paragraph{Datasets and Baselines} We conduct the experiments on widely used datasets, including CIFAR10-LT \cite{Krizhevsky2009LearningML}, CIFAR100-LT \cite{Krizhevsky2009LearningML}, STL10-LT \cite{Coates2011AnAO}, and ImageNet-127 \cite{Fan2021CoSSLCO}. 
Following previous works, we split the training data with a certain number of samples while controlling the imbalance ratio of data. \(tau_b\) and \(\tau_e\) are set to 2 and 4 for all datasets. For simplicity, we assume that \(c\) is dataset-independent and varies only with changes in the distribution of unlabeled data. Given 5 anchor distributions (consist, uniform, inverse, Gaussian, and inverse-Gaussian) above, the values of c are set to 4, 5, 6, 4, and 6, respectively. \(\rho_\text{max}\) is set to 0.95.  \(\rho_b(\text{head})\) and \(\rho_e(\text{head})\) are set to \(\rho_\text{max}\). Step size \(\alpha\) and threshold \(\nu\) in \cref{eq:dy} are set to 0.005 and 1.0 for all datasets.

We conduct comparisons with several CISSL algorithms that have been published in leading conferences and journals over the last few years. The baseline algorithms include LA \cite{Menon2020LongtailLV}, DARP \cite{Kim2020DistributionAR}, CReST \cite{Wei2021CReSTAC}, ABC \cite{Lee2021ABCAB}, DASO \cite{Oh2021DASODS}, ACR \cite{Wei2023TowardsRL}, CDMAD \cite{Lee2024CDMADCD}, CPE \cite{Ma2023ThreeHA}, and SimPro \cite{Du2024SimProAS}. All the reproduced results are consistent with the original paper or the codebase. We test our SC-SSL algorithm on the widely used USB codebase. More comparisons can be found in the Appendix.

\begin{table}[!t]
    \centering
    \small
    \begin{tabular}{>{\centering\arraybackslash}p{3.5cm}|>{\centering\arraybackslash}p{1.5cm}|>{\centering\arraybackslash}p{1.5cm}} 
        \toprule
        Algorithm / Resolution & \(32 \times 32\) &  \(64 \times 64\)  \\
        \midrule
        FixMatch  & 35.7 & 44.0 \\
        w/ CDMAD  & 55.6 & 61.7 \\
        w/ ACR & 57.2 & 63.6 \\
        w/ CPE & 57.8 & 64.1 \\
        w/ SimPro & 59.4 & 67.2 \\
        w/ SC-SSL & \textbf{62.3} & \textbf{69.4} \\
        \bottomrule
    \end{tabular}
        \caption{Test accuracy of recent CISSL algorithms and our proposed SC-SSL on ImageNet-127 benchmarks.}
    \label{tab:3}
\end{table}

\begin{table}[!t]
    \centering
    \small
    \begin{minipage}{0.37\linewidth}
        \centering
        \begin{tabular}{c|>{\centering\arraybackslash}p{0.6cm}|>{\centering\arraybackslash}p{0.6cm}}
            \toprule
            -  & C& I  \\
            \midrule
            2 & 83.89 & 86.02 \\
            3 & 83.54 & 85.98 \\
            4 & 83.50 & 86.15 \\
            \bottomrule
        \end{tabular}
        \caption{Comparison of different partitioning strategies. Classes are divided into 2/3/4 intervals. 'C' and 'I' denote consist and inverse, respectively.}
        \label{tab:whymedian}
    \end{minipage}%
    \hspace{0.05\linewidth} 
    \begin{minipage}{0.57\linewidth}
        \centering
        \begin{tabular}{>{\centering\arraybackslash}p{0.5cm}|>{\centering\arraybackslash}p{0.4cm}|>{\centering\arraybackslash}p{0.4cm}|>{\centering\arraybackslash}p{0.4cm}|>{\centering\arraybackslash}p{0.4cm}}
            \toprule
            -  & A1 & A2  & B1 & B2 \\
            \midrule
            ACR & 82.9 & 82.7  &  83.4 & 83.1 \\
            CPE & 82.0 & 81.4 &  82.7& 81.9\\
            Ours & \textbf{84.2} & \textbf{83.1} & \textbf{85.1} & \textbf{85.2}\\
            \bottomrule
        \end{tabular}
                \caption{Test accuracy on 4 chaotic distributions (\(N:M = 500:4000\)). A and B indicate the dominance of head/non-head classes in the unlabeled data. We generate 2 types of random distribution for each of them.}
        \label{tab:chaotic}
    \end{minipage}
\end{table}

\paragraph{Training Details}
We adhere to the default hyperparameters specified in USB, setting the batch sizes for labeled and unlabeled data to 64 and 128, respectively. For CIFAR10-LT, CIFAR100-LT, and STL10-LT, the input images are resized to \(32 \times 32\) pixels and the backbone network is WRN-28-2 \cite{Zagoruyko2016WideRN} without any pre-training.
For ImageNet-127, we downsample the images to \(64 \times 64\) and \(32 \times 32\) following CoSSL \cite{Fan2021CoSSLCO} and the backbone is ResNet-50 \cite{He2015DeepRL}. For the balanced and expansive classifier, the loss ratio of the unlabeled part is set to 2. 
Additionally, since the distributions of unlabeled data for comparison follow predefined distributions (as in some previous works \cite{Ma2023ThreeHA, Wei2023TowardsRL, Lee2024CDMADCD}), SC-SSL reweights \cite{Lai2022SmoothedAW} the loss function based on the prior of unlabeled sample counts.
We employ the SGD optimizer with a fixed learning rate of 0.03, a momentum of 0.9, and a weight decay of 0.0005. 

\subsection{Main Results}
In our experiments, we assessed the test accuracy of various CISSL algorithms, including our proposed SC-SSL, on CIFAR10-LT and CIFAR100-LT datasets. Using a WRN-28-2 architecture trained from scratch, we measured performance under consistent and inverse distributions as well as uniform and Gaussian distributions. The results are shown in \cref{tab:1}, \cref{tab:2} and \cref{tab:3}. 


\paragraph{Why are classes simply divided into head and non-head types? —— What if the unlabeled data distribution is chaotic?} 
Due to the current research focusing on long-tailed distributions, there is a significant disparity in the amount of labeled data between head and tail classes. Our goal is to enable non-head classes to leverage unlabeled data to enhance generalization performance, so we have roughly categorized them into two groups. This distinction is sufficient for the distributions discussed in previous work. Our experimental results in \cref{tab:whymedian} also support this when we try finer-grained \(\rho\) initialization. When the unlabeled data is chaotic, as the example in \cref{fig:2} shows, there is no significant difference in quantity between non-head and head classes. We can treat them as uniformly distributed, assuming \(c=5\), since the dynamic sampling probabilities will come into play. The experimental results are shown in \cref{tab:chaotic}.



\begin{table}[t]
  \centering
  \small
    \begin{tabular}{>{\centering\arraybackslash}p{1.0cm}>{\centering\arraybackslash}p{0.7cm}>{\centering\arraybackslash}p{0.7cm}|>{\centering\arraybackslash}p{1.2cm}|>{\centering\arraybackslash}p{1.2cm}}
    \toprule
    \(\rho\) & \(\tau\) & \(c\) & Consist & Inverse \\
    \midrule
     c-init & (0, 0) & 4 &  82.33& 82.16\\
     c-init & (2, 2) & 6 &  81.55 & 86.79\\
     c-init & (1, 2) & 4 &  85.28 & 86.12\\
     c-init & (2, 4) & 4 &  \textbf{86.53} & 88.54\\
     c-init & (2, 4) & 6 & 84.11 & \textbf{89.97} \\
     fix-max & (2, 4) &  - &  84.70 & 86.90\\
     fix-max &  (2, \text{-} ) & - & 83.80 & 86.49 \\
     fix-min &  (2, 1) & - &  80.22 & 85.65\\
     fix-min &  (2, 4) & - &  79.04 & 86.05\\
    \bottomrule
    \end{tabular}
      \caption{Ablation study of the key factors in our framework. ‘fix’ indicates the \(\rho\) is set to be \(\rho_{max}\) or \(\rho_{min}\) without dynamic sampling.}
    \label{tab:factors}
\end{table}%

\paragraph{What is the impact of ignoring certain factors?}
All three key factors are essential. Without \(\gamma\), it is impossible to initialize the sampling settings; when \(c\) is small, excessively high sampling probabilities can lead to confirmation errors. If \(\Delta p\) is not used, it becomes challenging to address the impact of bias in the labeled data itself. Furthermore, without \(\rho\), there is no way to dynamically adjust the sampling probabilities, as \(\Delta p\) is determined based on the labeled data, making it difficult to estimate the effect of unlabeled data consistency regularization on it. The experimental results (\(N:M = 1500:3000\)) are shown in \cref{tab:factors}.


\bibliography{aaai2026}

@article{Berthelot2019MixMatchAH,
  title={MixMatch: A Holistic Approach to Semi-Supervised Learning},
  author={David Berthelot and Nicholas Carlini and Ian J. Goodfellow and Nicolas Papernot and Avital Oliver and Colin Raffel},
  journal={ArXiv},
  year={2019},
  volume={abs/1905.02249}
}

@article{Miyato2017VirtualAT,
  title={Virtual Adversarial Training: A Regularization Method for Supervised and Semi-Supervised Learning},
  author={Takeru Miyato and Shin-ichi Maeda and Masanori Koyama and Shin Ishii},
  journal={IEEE Transactions on Pattern Analysis and Machine Intelligence},
  year={2017},
  volume={41},
  pages={1979-1993}
}

@article{Tarvainen2017WeightaveragedCT,
  title={Weight-averaged consistency targets improve semi-supervised deep learning results},
  author={Antti Tarvainen and Harri Valpola},
  journal={ArXiv},
  year={2017},
  volume={abs/1703.01780}
}

@article{Sohn2020FixMatchSS,
  title={FixMatch: Simplifying Semi-Supervised Learning with Consistency and Confidence},
  author={Kihyuk Sohn and David Berthelot and Chun-Liang Li and Zizhao Zhang and Nicholas Carlini and Ekin Dogus Cubuk and Alexey Kurakin and Han Zhang and Colin Raffel},
  journal={ArXiv},
  year={2020},
  volume={abs/2001.07685}
}

@article{Berthelot2019ReMixMatchSL,
  title={ReMixMatch: Semi-Supervised Learning with Distribution Alignment and Augmentation Anchoring},
  author={David Berthelot and Nicholas Carlini and Ekin Dogus Cubuk and Alexey Kurakin and Kihyuk Sohn and Han Zhang and Colin Raffel},
  journal={ArXiv},
  year={2019},
  volume={abs/1911.09785}
}

@article{Li2020CoMatchSL,
  title={CoMatch: Semi-supervised Learning with Contrastive Graph Regularization},
  author={Junnan Li and Caiming Xiong and Steven C. H. Hoi},
  journal={2021 IEEE/CVF International Conference on Computer Vision (ICCV)},
  year={2020},
  pages={9455-9464}
}

@article{Li2022NestedCL,
  title={Nested Collaborative Learning for Long-Tailed Visual Recognition},
  author={Jun Li and Zichang Tan and Jun Wan and Zhen Lei and Guodong Guo},
  journal={2022 IEEE/CVF Conference on Computer Vision and Pattern Recognition (CVPR)},
  year={2022},
  pages={6939-6948}
}

@article{Wang2020LongtailedRB,
  title={Long-tailed Recognition by Routing Diverse Distribution-Aware Experts},
  author={Xudong Wang and Long Lian and Zhongqi Miao and Ziwei Liu and Stella X. Yu},
  journal={ArXiv},
  year={2020},
  volume={abs/2010.01809}
}

@inproceedings{Xiang2020LearningFM,
  title={Learning From Multiple Experts: Self-paced Knowledge Distillation for Long-tailed Classification},
  author={Liuyu Xiang and Guiguang Ding},
  booktitle={European Conference on Computer Vision},
  year={2020}
}

@article{Fan2021CoSSLCO,
  title={CoSSL: Co-Learning of Representation and Classifier for Imbalanced Semi-Supervised Learning},
  author={Yue Fan and Dengxin Dai and Bernt Schiele},
  journal={2022 IEEE/CVF Conference on Computer Vision and Pattern Recognition (CVPR)},
  year={2021},
  pages={14554-14564}
}

@inproceedings{Lee2021ABCAB,
  title={ABC: Auxiliary Balanced Classifier for Class-imbalanced Semi-supervised Learning},
  author={Hyuck Lee and Seungjae Shin and Heeyoung Kim},
  booktitle={Neural Information Processing Systems},
  year={2021}
}

@article{Wei2021CReSTAC,
  title={CReST: A Class-Rebalancing Self-Training Framework for Imbalanced Semi-Supervised Learning},
  author={Chen Wei and Kihyuk Sohn and Clayton Mellina and Alan Loddon Yuille and Fan Yang},
  journal={2021 IEEE/CVF Conference on Computer Vision and Pattern Recognition (CVPR)},
  year={2021},
  pages={10852-10861}
}

@article{Kim2020DistributionAR,
  title={Distribution Aligning Refinery of Pseudo-label for Imbalanced Semi-supervised Learning},
  author={Jaehyung Kim and Youngbum Hur and Sejun Park and Eunho Yang and Sung Ju Hwang and Jinwoo Shin},
  journal={ArXiv},
  year={2020},
  volume={abs/2007.08844}
}

@inproceedings{Lai2022SmoothedAW,
  title={Smoothed Adaptive Weighting for Imbalanced Semi-Supervised Learning: Improve Reliability Against Unknown Distribution Data},
  author={Zhengfeng Lai and Chao Wang and Henrry Gunawan and Sen-Ching Samson Cheung and Chen-Nee Chuah},
  booktitle={International Conference on Machine Learning},
  year={2022}
}

@article{Oh2021DASODS,
  title={DASO: Distribution-Aware Semantics-Oriented Pseudo-label for Imbalanced Semi-Supervised Learning},
  author={Youngtaek Oh and Dong-Jin Kim and In-So Kweon},
  journal={2022 IEEE/CVF Conference on Computer Vision and Pattern Recognition (CVPR)},
  year={2021},
  pages={9776-9786}
}

@article{Lee2024CDMADCD,
  title={CDMAD: Class-Distribution-Mismatch-Aware Debiasing for Class-Imbalanced Semi-Supervised Learning},
  author={Hyuck Lee and Heeyoung Kim},
  journal={2024 IEEE/CVF Conference on Computer Vision and Pattern Recognition (CVPR)},
  year={2024},
  pages={23891-23900}
}

@article{Wei2023TowardsRL,
  title={Towards Realistic Long-Tailed Semi-Supervised Learning: Consistency is All You Need},
  author={Tong Wei and Kai Gan},
  journal={2023 IEEE/CVF Conference on Computer Vision and Pattern Recognition (CVPR)},
  year={2023},
  pages={3469-3478},
  url={https://api.semanticscholar.org/CorpusID:260085665}
}

@article{Du2024SimProAS,
  title={SimPro: A Simple Probabilistic Framework Towards Realistic Long-Tailed Semi-Supervised Learning},
  author={Chaoqun Du and Yizeng Han and Gao Huang},
  journal={ArXiv},
  year={2024},
  volume={abs/2402.13505}
}

@inproceedings{Ma2023ThreeHA,
  title={Three Heads Are Better Than One: Complementary Experts for Long-Tailed Semi-supervised Learning},
  author={Chengcheng Ma and Ismail Elezi and Jiankang Deng and Weiming Dong and Changsheng Xu},
  booktitle={AAAI Conference on Artificial Intelligence},
  year={2024},
  url={https://api.semanticscholar.org/CorpusID:266551916}
}

@inproceedings{Wang2022ImbalancedSL,
  title={Imbalanced Semi-supervised Learning with Bias Adaptive Classifier},
  author={Renzhen Wang and Xixi Jia and Quanziang Wang and Yichen Wu and Deyu Meng},
  booktitle={International Conference on Learning Representations},
  year={2022}
}

@article{Zheng2024BEMBA,
  title={BEM: Balanced and Entropy-Based Mix for Long-Tailed Semi-Supervised Learning},
  author={Hongwei Zheng and Linyuan Zhou and Han Li and Jinming Su and Xiaoming Wei and Xiaoming Xu},
  journal={2024 IEEE/CVF Conference on Computer Vision and Pattern Recognition (CVPR)},
  year={2024},
  pages={22893-22903}
}

@article{Li2023TwiceCB,
  title={Twice Class Bias Correction for Imbalanced Semi-Supervised Learning},
  author={Lan Li and Bowen Tao and Lu Han and De-Chuan Zhan and Han-Jia Ye},
  journal={ArXiv},
  year={2024},
  volume={abs/2312.16604}
}

@article{Wang2022FreeMatchST,
  title={FreeMatch: Self-adaptive Thresholding for Semi-supervised Learning},
  author={Yidong Wang and Hao Chen and Qiang Heng and Wenxin Hou and Marios Savvides and Takahiro Shinozaki and Bhiksha Raj and Zhen Wu and Jindong Wang},
  journal={ArXiv},
  year={2022},
  volume={abs/2205.07246}
}

@article{Wei2020TheoreticalAO,
  title={Theoretical Analysis of Self-Training with Deep Networks on Unlabeled Data},
  author={Colin Wei and Kendrick Shen and Yining Chen and Tengyu Ma},
  journal={ArXiv},
  year={2020},
  volume={abs/2010.03622}
}

@inproceedings{Guo2022ClassImbalancedSL,
  title={Class-Imbalanced Semi-Supervised Learning with Adaptive Thresholding},
  author={Lan-Zhe Guo and Yu-Feng Li},
  booktitle={International Conference on Machine Learning},
  year={2022}
}

@article{Yu2023InPLPT,
  title={InPL: Pseudo-labeling the Inliers First for Imbalanced Semi-supervised Learning},
  author={Zhu Liang Yu and Yin Li and Yong Jae Lee},
  journal={ArXiv},
  year={2023},
  volume={abs/2303.07269}
}

@article{Yang2020RethinkingTV,
  title={Rethinking the Value of Labels for Improving Class-Imbalanced Learning},
  author={Yuzhe Yang and Zhi Xu},
  journal={ArXiv},
  year={2020},
  volume={abs/2006.07529}
}

@article{Menon2020LongtailLV,
  title={Long-tail learning via logit adjustment},
  author={Aditya Krishna Menon and Sadeep Jayasumana and Ankit Singh Rawat and Himanshu Jain and Andreas Veit and Sanjiv Kumar},
  journal={ArXiv},
  year={2020},
  volume={abs/2007.07314}
}

@inproceedings{Krizhevsky2009LearningML,
  title={Learning Multiple Layers of Features from Tiny Images},
  author={Alex Krizhevsky},
  year={2009}
}

@inproceedings{Coates2011AnAO,
  title={An Analysis of Single-Layer Networks in Unsupervised Feature Learning},
  author={Adam Coates and A. Ng and Honglak Lee},
  booktitle={International Conference on Artificial Intelligence and Statistics},
  year={2011}
}

@article{Zagoruyko2016WideRN,
  title={Wide Residual Networks},
  author={Sergey Zagoruyko and Nikos Komodakis},
  journal={ArXiv},
  year={2016},
  volume={abs/1605.07146}
}

@article{He2015DeepRL,
  title={Deep Residual Learning for Image Recognition},
  author={Kaiming He and X. Zhang and Shaoqing Ren and Jian Sun},
  journal={2016 IEEE Conference on Computer Vision and Pattern Recognition (CVPR)},
  year={2015},
  pages={770-778}
}

@article{Selvaraju2016GradCAMVE,
  title={Grad-CAM: Visual Explanations from Deep Networks via Gradient-Based Localization},
  author={Ramprasaath R. Selvaraju and Abhishek Das and Ramakrishna Vedantam and Michael Cogswell and Devi Parikh and Dhruv Batra},
  journal={International Journal of Computer Vision},
  year={2016},
  volume={128},
  pages={336 - 359},
  url={https://api.semanticscholar.org/CorpusID:15019293}
}

@inproceedings{chen2024dual,
  title={Dual-rain: Video rain removal using assertive and gentle teachers},
  author={Chen, Tingting and Lin, Beibei and Jin, Yeying and Yan, Wending and Ye, Wei and Yuan, Yuan and Tan, Robby T},
  booktitle={European Conference on Computer Vision},
  pages={127--143},
  year={2024},
  organization={Springer}
}

@inproceedings{hong2023hqretouch,
  title={HQRetouch: Learning professional face retouching via masked feature fusion and semantic-aware modulation},
  author={Hong, Gangyi and Wang, Fangshi and Tian, Senmao and Lu, Ming and Liu, Jiaming and Zhang, Shunli},
  booktitle={2023 IEEE International Conference on Image Processing (ICIP)},
  pages={440--444},
  year={2023},
  organization={IEEE}
}

@inproceedings{lin2025nighthaze,
  title={Nighthaze: Nighttime image dehazing via self-prior learning},
  author={Lin, Beibei and Jin, Yeying and Wending, Yan and Ye, Wei and Yuan, Yuan and Tan, Robby T},
  booktitle={Proceedings of the AAAI Conference on Artificial Intelligence},
  number={5},
  pages={5209--5217},
  year={2025}
}

@inproceedings{chen2024adaptive,
  title={Adaptive quantization with mixed-precision based on low-cost proxy},
  author={Chen, Junzhe and Yang, Qiao and Tian, Senmao and Zhang, Shunli},
  booktitle={ICASSP 2024-2024 IEEE International Conference on Acoustics, Speech and Signal Processing (ICASSP)},
  pages={6720--6724},
  year={2024},
  organization={IEEE}
}

\newpage
\makeatletter
\@ifundefined{isChecklistMainFile}{
  \newif\ifreproStandalone
  \reproStandalonetrue
}{
  \newif\ifreproStandalone
  \reproStandalonefalse
}
\makeatother

\ifreproStandalone
\documentclass[letterpaper]{article}
\usepackage[submission]{aaai2026}
\setlength{\pdfpagewidth}{8.5in}
\setlength{\pdfpageheight}{11in}
\usepackage{times}
\usepackage{helvet}
\usepackage{courier}
\usepackage{xcolor}
\usepackage{times}  
\usepackage{helvet}  
\usepackage{courier}  
\usepackage[hyphens]{url}  
\usepackage{graphicx} 
\urlstyle{rm} 
\def\UrlFont{\rm}  
\usepackage{natbib}  
\usepackage{caption} 
\frenchspacing  
\setlength{\pdfpagewidth}{8.5in} 
\setlength{\pdfpageheight}{11in} 
%
\usepackage{algorithm}

%
\usepackage{newfloat}
\usepackage{listings}
\usepackage{array}
\usepackage[noend]{algpseudocode}
\usepackage{amsmath}
\usepackage{booktabs} 
\usepackage{amsthm}
\usepackage{amssymb}
\usepackage{cleveref}
\newtheorem{theorem}{Theorem}[section]
\newtheorem*{theorem*}{Theorem}

\frenchspacing

\begin{document}
\fi
\setlength{\leftmargini}{20pt}
\makeatletter\def\@listi{\leftmargin\leftmargini \topsep .5em \parsep .5em \itemsep .5em}
\def\@listii{\leftmargin\leftmarginii \labelwidth\leftmarginii \advance\labelwidth-\labelsep \topsep .4em \parsep .4em \itemsep .4em}
\def\@listiii{\leftmargin\leftmarginiii \labelwidth\leftmarginiii \advance\labelwidth-\labelsep \topsep .4em \parsep .4em \itemsep .4em}\makeatother

\setcounter{secnumdepth}{0}
\renewcommand\thesubsection{\arabic{subsection}}
\renewcommand\labelenumi{\thesubsection.\arabic{enumi}}

\section*{Supplementary Material}

\vspace{1em}
\hrule
\vspace{1em}








\section{Proof of \textbf{Theorem 0.1}}
\label{sec:proof}

\begin{theorem*}
The pseudo-label \(Y_{psl}\) has the following probability distribution:
\begin{equation*}
\begin{aligned}
P(Y_{psl}=1) &= \gamma \Phi \left( \frac{\frac{\mu_{2}-\mu_{1}}{2} - \frac{1}{\beta}\log\left(\frac{\rho}{1-\rho}\right) - \Delta p}{\sigma_{2}} \right) \\
&\quad + (1-\gamma)\Phi\left( \frac{\frac{\mu_{1}-\mu_{2}}{2} - \frac{1}{\beta}\log\left(\frac{\rho}{1-\rho}\right) - \Delta p}{\sigma_{1}} \right), \\
P(Y_{psl}=-1) &= (1-\gamma)\Phi\left( \frac{\frac{\mu_{2}-\mu_{1}}{2} - \frac{1}{\beta}\log\left(\frac{\rho}{1-\rho}\right) + \Delta p}{\sigma_{1}} \right) \\
&\quad + \gamma\Phi\left( \frac{\frac{\mu_{1}-\mu_{2}}{2} - \frac{1}{\beta}\log\left(\frac{\rho}{1-\rho}\right) + \Delta p}{\sigma_{2}} \right), \\
P(Y_{psl}=0) &= 1 - P(Y_{psl}=1) - P(Y_{psl}=-1),
\end{aligned}
\end{equation*}
where $\Phi(\cdot)$ denotes the cumulative distribution function (CDF) of the standard normal distribution.
\end{theorem*}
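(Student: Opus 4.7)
The plan is to reduce the three stated probabilities to Gaussian tail integrals by first translating the pseudo-label decision rule (a confidence-threshold rule stated earlier in the paper, where a sigmoid/softmax output with temperature $\beta$ is compared against $\rho$, together with the shift $\Delta p$) back into inequalities on the raw score $s$. Reading off the formulas, the rule is equivalent to two cutoffs
\[
\tau_{+} \;=\; \tfrac{\mu_{1}+\mu_{2}}{2} + \Delta p + \tfrac{1}{\beta}\log\tfrac{\rho}{1-\rho},
\qquad
\tau_{-} \;=\; \tfrac{\mu_{1}+\mu_{2}}{2} + \Delta p - \tfrac{1}{\beta}\log\tfrac{\rho}{1-\rho},
\]
so that $\{Y_{psl}=1\}=\{s>\tau_{+}\}$, $\{Y_{psl}=-1\}=\{s<\tau_{-}\}$, and $\{Y_{psl}=0\}$ is the complement of their union. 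Establishing this identification cleanly is the first step I would write out, because everything else is just Gaussian arithmetic on top of it.

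Next I would invoke the law of total probability, conditioning on the latent class. Under the Gaussian-mixture assumption of the setup, $s\mid Y=1\sim\mathcal{N}(\mu_{2},\sigma_{2}^{2})$ with prior $\gamma$ and $s\mid Y=-1\sim\mathcal{N}(\mu_{1},\sigma_{1}^{2})$ with prior $1-\gamma$. Thus
\[
P(Y_{psl}=1) \;=\; \gamma\,P(s>\tau_{+}\mid Y=1) + (1-\gamma)\,P(s>\tau_{+}\mid Y=-1),
\]
and each conditional probability is $1-\Phi((\tau_{+}-\mu_{k})/\sigma_{k})=\Phi((\mu_{k}-\tau_{+})/\sigma_{k})$ by the standard identity $1-\Phi(x)=\Phi(-x)$. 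Substituting the closed form of $\tau_{+}$ and simplifying yields the two summands in the stated expression for $P(Y_{psl}=1)$.

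The derivation of $P(Y_{psl}=-1)$ is symmetric but does not invoke the $1-\Phi(x)=\Phi(-x)$ flip, since $\{s<\tau_{-}\}$ is already a lower tail; this explains why the sign of $\Delta p$ inside the CDF flips between the two cases while the sign of the logit term does not. Finally, $P(Y_{psl}=0)$ is obtained directly from the total-mass identity $P(Y_{psl}=0)=1-P(Y_{psl}=1)-P(Y_{psl}=-1)$, requiring no extra computation.

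The proof is not technically deep: the main obstacle is bookkeeping. I would need to be careful about (i) pairing the correct class prior with the correct variance, (ii) consistently using the convention $1-\Phi(x)=\Phi(-x)$ to convert upper tails to values of $\Phi$ with the stated sign, and (iii) absorbing the three additive constants $\tfrac{\mu_{1}+\mu_{2}}{2}$, $\Delta p$, and $\tfrac{1}{\beta}\log(\rho/(1-\rho))$ into the CDF arguments so that the result matches the advertised form. Once the threshold identification $\tau_{\pm}$ is written down correctly, each of the four CDF terms is a one-line reduction.
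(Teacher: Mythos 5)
Your proposal is correct and follows essentially the same route as the paper: invert the sigmoid threshold rule into the two cutoffs $\tau_{\pm}$ on the Gaussian input (your $\tau_{+}$ is exactly the paper's $T$), apply the law of total probability over the latent class, and express each conditional tail via $\Phi$, using $1-\Phi(x)=\Phi(-x)$ only for the upper tail. The only cosmetic difference is that you call the Gaussian variable $s$ while the paper reserves $s(\cdot)$ for the sigmoid output and writes the input as $\mathcal{x}$; your sign bookkeeping for $\Delta p$ and the logit term matches the stated formulas.
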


\begin{proof}
We consider a binary classification setting with class prior $P(Y = 1) = \gamma$ and $P(Y = -1) = 1 - \gamma$. The input $\mathcal{x}$ follows class-conditional distributions:
\[
\mathcal{x} \mid Y = 1 \sim \mathcal{N}(\mu_2, \sigma_2^2), \quad \mathcal{x} \mid Y = -1 \sim \mathcal{N}(\mu_1, \sigma_1^2),
\]
where $\mu_2 > \mu_1$ without loss of generality. The classifier predicts a confidence score:
\[
s(\mathcal{x}) = \frac{1}{1 + \exp\left( -\beta\left( \mathcal{x} - \Delta p - \frac{\mu_1 + \mu_2}{2} \right) \right)},
\]
where $\Delta p$ is the logit adjustment term and $\beta$ reflects the model's confidence sharpness. A pseudo-label $Y_{psl}$ is assigned by comparing $s(\mathcal{x})$ to a fixed threshold $\rho \in (0.5, 1)$:

\begin{itemize}
    \item $Y_{psl} = 1$ if $s(\mathcal{x}) > \rho$,
    \item $Y_{psl} = -1$ if $s(\mathcal{x}) < 1 - \rho$,
    \item $Y_{psl} = 0$ otherwise (i.e., abstain).
\end{itemize}

We now compute $P(Y_{psl}=1)$ by considering the threshold condition for $s(\mathcal{x}) > \rho$. This is equivalent to:
\[
\mathcal{x} > \frac{\mu_1 + \mu_2}{2} + \frac{1}{\beta} \log\left( \frac{\rho}{1 - \rho} \right) + \Delta p.
\]
Define the right-hand side as $T$:
\[
T := \frac{\mu_1 + \mu_2}{2} + \frac{1}{\beta} \log\left( \frac{\rho}{1 - \rho} \right) + \Delta p.
\]

Now, under the distribution $\mathcal{x} \mid Y=1 \sim \mathcal{N}(\mu_2, \sigma_2^2)$:
\begin{align*}
P(Y_{psl}=1 \mid Y=1) &= P(\mathcal{x} > T \mid Y=1) \\
&= 1 - \Phi\left( \frac{T - \mu_2}{\sigma_2} \right) \\
&= \Phi\left( \frac{\mu_2 - T}{\sigma_2} \right).
\end{align*}

Substituting $T$ back in:
\[
P(Y_{psl}=1 \mid Y=1) = \Phi\left( \frac{\frac{\mu_2 - \mu_1}{2} - \frac{1}{\beta} \log\left( \frac{\rho}{1 - \rho} \right) - \Delta p}{\sigma_2} \right).
\]

Similarly, under $Y=-1$:
\[
P(Y_{psl}=1 \mid Y=-1) = \Phi\left( \frac{\frac{\mu_1 - \mu_2}{2} - \frac{1}{\beta} \log\left( \frac{\rho}{1 - \rho} \right) - \Delta p}{\sigma_1} \right).
\]

Thus, using the law of total probability:
\begin{align*}
P(Y_{psl}=1) &= \gamma \cdot P(Y_{psl}=1 \mid Y=1) \\
&+ (1 - \gamma) \cdot P(Y_{psl}=1 \mid Y=-1) \\
&= \gamma \Phi\left( \frac{\frac{\mu_2 - \mu_1}{2} - \frac{1}{\beta} \log\left( \frac{\rho}{1 - \rho} \right) - \Delta p}{\sigma_2} \right) \\
&\quad + (1 - \gamma) \Phi\left( \frac{\frac{\mu_1 - \mu_2}{2} - \frac{1}{\beta} \log\left( \frac{\rho}{1 - \rho} \right) - \Delta p}{\sigma_1} \right).
\end{align*}

The derivation of $P(Y_{psl} = -1)$ follows the same logic, with threshold:
\[
\mathcal{x} < \frac{\mu_1 + \mu_2}{2} - \frac{1}{\beta} \log\left( \frac{\rho}{1 - \rho} \right) + \Delta p,
\]
leading to the expressions in the theorem.

Finally, $P(Y_{psl}=0)$ is obtained by subtracting:
\[
P(Y_{psl}=0) = 1 - P(Y_{psl}=1) - P(Y_{psl}=-1).
\]

This completes the proof.
\end{proof}

\section{Overview of Algorithm~\ref{alg:scssl}.}
The SC-SSL algorithm presents a training framework that integrates adaptive sampling control into semi-supervised learning under class-imbalanced settings. It maintains two classifiers---a standard output classifier $F_b$ and an expansive classifier $F_e$---that share a common backbone $B(\cdot)$. At the beginning of training, an anchor distribution matching procedure is used to estimate the expansion factor $c$, which guides the initialization of class-wise confidence thresholds $\rho_b$ and $\rho_e$.

In each iteration, supervised loss is computed using balanced softmax adjustment for labeled samples. For unlabeled data, SC-SSL generates pseudo-labels via the expansive classifier and applies a dynamic thresholding strategy to decide which predictions are confident enough to be used for consistency regularization. The key novelty lies in adaptively lowering the thresholds for non-head classes based on an optimization bias signal $\mathbf{b}_{opt}$, thereby increasing their sampling probability.

Finally, both classifiers are jointly updated by minimizing the total loss comprising supervised and consistency components. This design allows SC-SSL to expand the decision boundaries of minority classes while preserving classification robustness, thereby addressing the dual challenges of label imbalance and pseudo-label noise.

\begin{algorithm}[!t]
\caption{SC-SSL}
\label{alg:scssl}
\begin{algorithmic}[1]
\Require Labeled dataset $\mathcal{D}_l = \{(x_i^l, y_i^l)\}$, Unlabeled dataset $\mathcal{D}_u = \{x_j^u\}$
\Require Backbone $B(\cdot)$, Classifiers $F_b, F_e$, Confidence thresholds $\rho_b, \rho_e$
\Require Hyperparameters $\tau_b, \tau_e, \alpha, \nu, \rho_{\max}$
\State Initialize expansion factor $c$ via anchor distribution matching (KL)
\State Initialize $\rho^0_b(\text{non-head}), \rho^0_e(\text{non-head})$ based on $c$

\For{each training step $t = 1, \ldots, T$}
    \State Sample minibatches $\mathcal{B}_l \subseteq \mathcal{D}_l$, $\mathcal{B}_u \subseteq \mathcal{D}_u$
    \ForAll{$(x_i^l, y_i^l) \in \mathcal{B}_l$}
        \State Compute features $h_i \gets B(x_i^l)$
        \State Compute supervised loss:
        \[\mathcal{L}_{sup}(x_i^l) = \ell_{CE}(F(h_i) + \tau \cdot \Delta p_{y_i}, y_i^l)\]
    \EndFor

    \ForAll{$x_j^u \in \mathcal{B}_u$}
        \State Get weak and strong augmentations: $x_j^u$, $\mathcal{A}(x_j^u)$
        \State Compute pseudo-label:
        \[\tilde{y}_j \gets \arg\max \sigma(F_e(B(x_j^u)))\]
        \State Compute confidence: $s_j \gets \max \sigma(F_e(B(x_j^u)))$
        \If{$s_j > \rho^t_e(\tilde{y}_j)$}
            \State Compute consistency loss:
            \[\mathcal{L}_{con}(x_j^u) = \ell_{CE}(F_e(B(\mathcal{A}(x_j^u))), \tilde{y}_j)\]
        \EndIf
    \EndFor

    \State Update $\rho^{t+1}_e(k) \leftarrow \rho^t_e(k) - \alpha \cdot \mathbb{I}(\mathbf{b}_{opt}(k) > \nu)$
    \State Same update for $\rho^{t+1}_b(k)$ if applied to $F_b$
    \State Update parameters via total loss:
    \[
        \mathcal{L} = \mathcal{L}_{basic} + \mathcal{L}_{sup}^b + \mathcal{L}_{con}^b + \mathcal{L}_{sup}^e + \mathcal{L}_{con}^e
    \]
\EndFor
\State \Return Trained classifiers $F_b$, $F_e$
\end{algorithmic}
\end{algorithm}

\begin{table*}[!t]
  \centering
  \caption{Test accuracy on Cifar100 (labeld nums 150, unlabeled nums 300) and STL10 (labeled 150).}
  \begin{tabular}{@{}c|cc|cc@{}}
    \toprule
    Dataset & \multicolumn{2}{c|}{Cifar100} & \multicolumn{2}{c}{STL10}  \\
    \(\gamma_l \text{:} \gamma_u\) &10:10 & 20:20 & 10: & 20: \\
    \midrule
    FixMatch & 47.92 {\tiny (±0.55)} & 42.23 {\tiny (±0.89)} & 66.67 {\tiny (±1.89)} & 55.99 {\tiny (±3.84)} \\
    w/ ADELLO [3] & 58.72 {\tiny (±0.18)} & NA & 75.70 {\tiny (±0.70)} & 74.60 {\tiny (±0.40)} \\
    w/ RECD [5]  & 59.41 {\tiny (±0.37)} & NA & 76.01 {\tiny (±0.55)} & 74.95 {\tiny (±0.66)}  \\
    w/ TCBC [6] & 59.40 {\tiny (±0.28)} & 53.90 {\tiny (±0.72)} & 77.60 {\tiny (±0.93)} & 74.90 {\tiny (±1.42)} \\
    w/ IFMatch [4] & NA & 53.61 {\tiny (±0.25)} & NA & NA  \\
    w/ BEM+ACR & 60.99 {\tiny (±0.55)} & 59.80 {\tiny (±0.37)} & 79.30 {\tiny (±0.34)} & 75.90 {\tiny (±0.15)}  \\
    w/ BEM[1] & 59.00 {\tiny (±0.23)} & 54.30 {\tiny (±0.36)} & 74.32 {\tiny (±1.05)} & 72.69 {\tiny (±0.88)} \\
    w/ BEM+Ours & \textbf{61.88} {\tiny (±0.23)} & \textbf{60.69} {\tiny (±0.18)} & \textbf{80.70} {\tiny (±0.41)} & \textbf{78.05} {\tiny (±0.35)} \\
    w/ Ours & \textbf{60.65} {\tiny (±0.23)} & \textbf{59.88} {\tiny (±0.37)} & \textbf{79.26} {\tiny (±0.31)} & \textbf{77.11} {\tiny (±0.55)} \\
    \bottomrule
  \end{tabular}
  \label{tab:reb1}
\end{table*}

\begin{table*}[!t]
  \centering
  \caption{Test accuracy on Cifar10 (labeld 1500, unlabeled 3000) and ImageNet127. "-" denotes the reverse distribution. "x" denotes the resolution of images.}
  \begin{tabular}{@{}c|ccc|cc@{}}
    \toprule
    Dataset & \multicolumn{3}{c|}{Cifar10} & \multicolumn{2}{c}{ImageNet} \\
    \(\gamma_l \text{:} \gamma_u\) & 100:100 & 100:-100 & 150:150 & x32 & x64\\
    \midrule
    FixMatch & 76.49 {\tiny (±0.72)} & 68.92 {\tiny (±0.79)} & 72.15 {\tiny (±0.94)}  & 35.7 & 44.0 \\
    w/ ADELLO [3] & 83.80 {\tiny (±0.30)} & 86.10 {\tiny (±0.40)} & 79.47 {\tiny (±0.22)} & 47.5 & 58.0 \\
    w/ RECD [5] & 84.60 {\tiny (±0.13)} & 85.91 {\tiny (±0.46)} & 80.60 {\tiny (±0.53)} & 47.3 & 59.5 \\
    w/ TCBC [6] & 84.00 {\tiny (±0.55)} & 85.70 {\tiny (±0.17)} & 80.40 {\tiny (±0.58)}  & 48.2 & NA \\
    w/ IFMatch [4] & 79.46 {\tiny (±0.35)} & NA & 75.59 {\tiny (±0.48)} & 48.6 & NA \\
    w/ BEM+ACR & 85.50 {\tiny (±0.28)} & 89.80 {\tiny (±0.12)} & 83.80 {\tiny (±0.12)} & 58.0 & 63.9 \\
    w/ BEM[1] & 80.30 {\tiny (±0.62)} & 79.10 {\tiny (±0.77)} & 75.70 {\tiny (±0.22)} & 53.3 & 58.2 \\
    w/ BEM+Ours & \textbf{87.45} {\tiny (±0.11)} & \textbf{90.50} {\tiny (±0.25)} & \textbf{85.03} {\tiny (±0.12)}  & \textbf{62.9} & \textbf{69.7} \\
    w/ Ours & \textbf{86.53} {\tiny (±0.16)} & \textbf{89.97} {\tiny (±0.20)} & \textbf{83.86} {\tiny (±0.15)} & \textbf{62.3} & \textbf{69.4}\\
    \bottomrule
  \end{tabular}
  \label{tab:reb2}
\end{table*}

\begin{figure*}[!t]
    \centering
    \begin{minipage}{0.30\textwidth}
        \includegraphics[width=\linewidth]{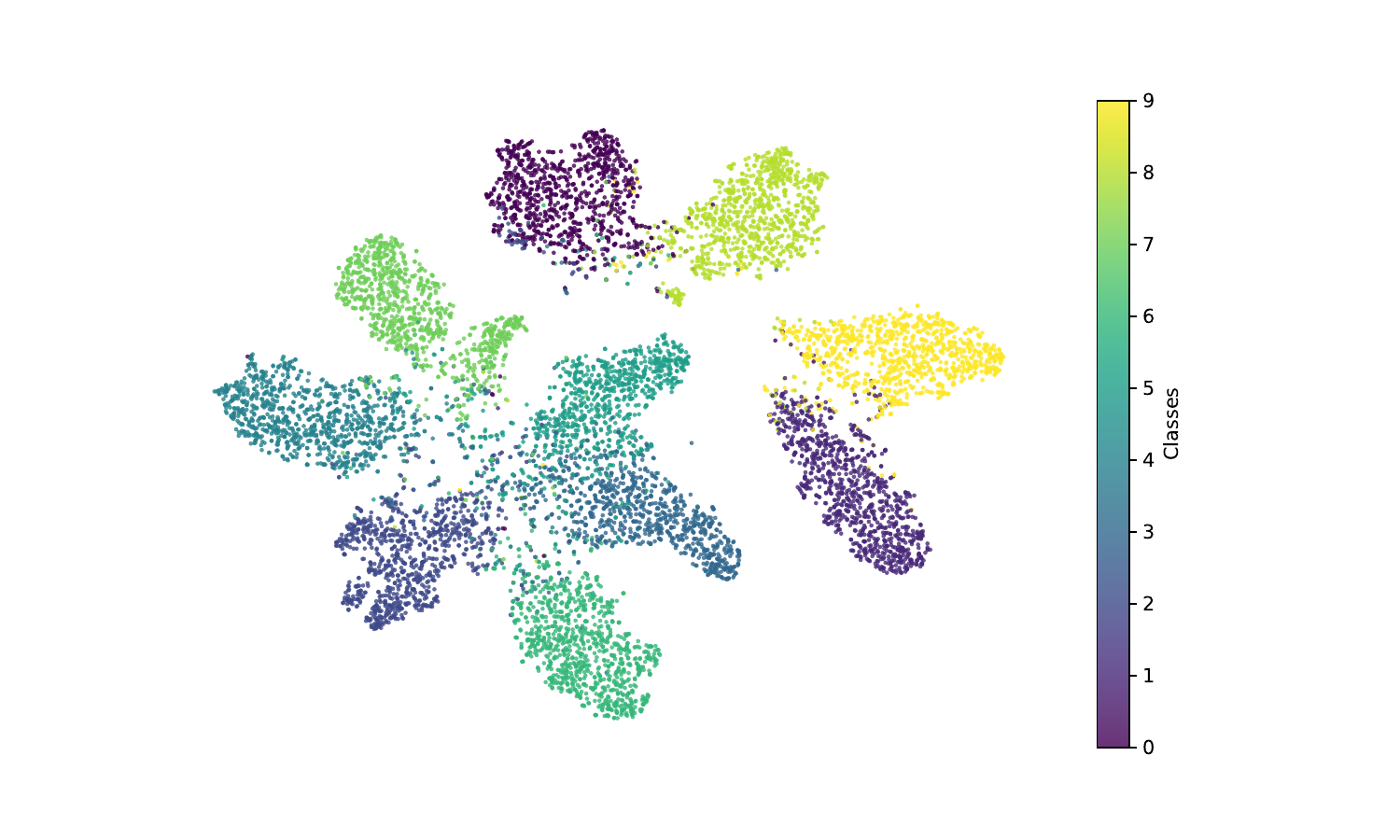}
    \end{minipage}%
    \hspace{0.1cm}
    \begin{minipage}{0.30\textwidth}
        \includegraphics[width=\linewidth]{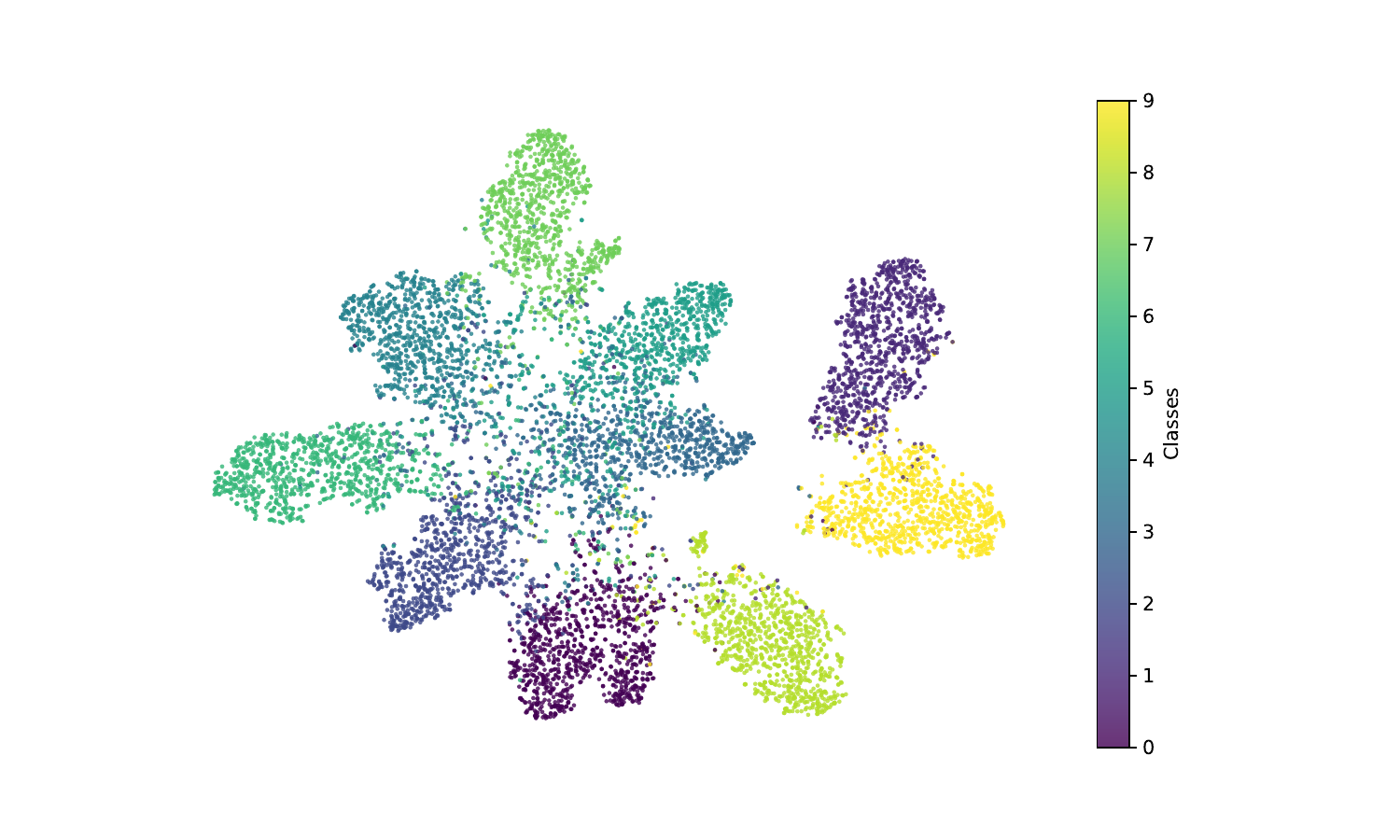}
    \end{minipage}%
    \hspace{0.1cm}
    \begin{minipage}{0.30\textwidth}
        \includegraphics[width=\linewidth]{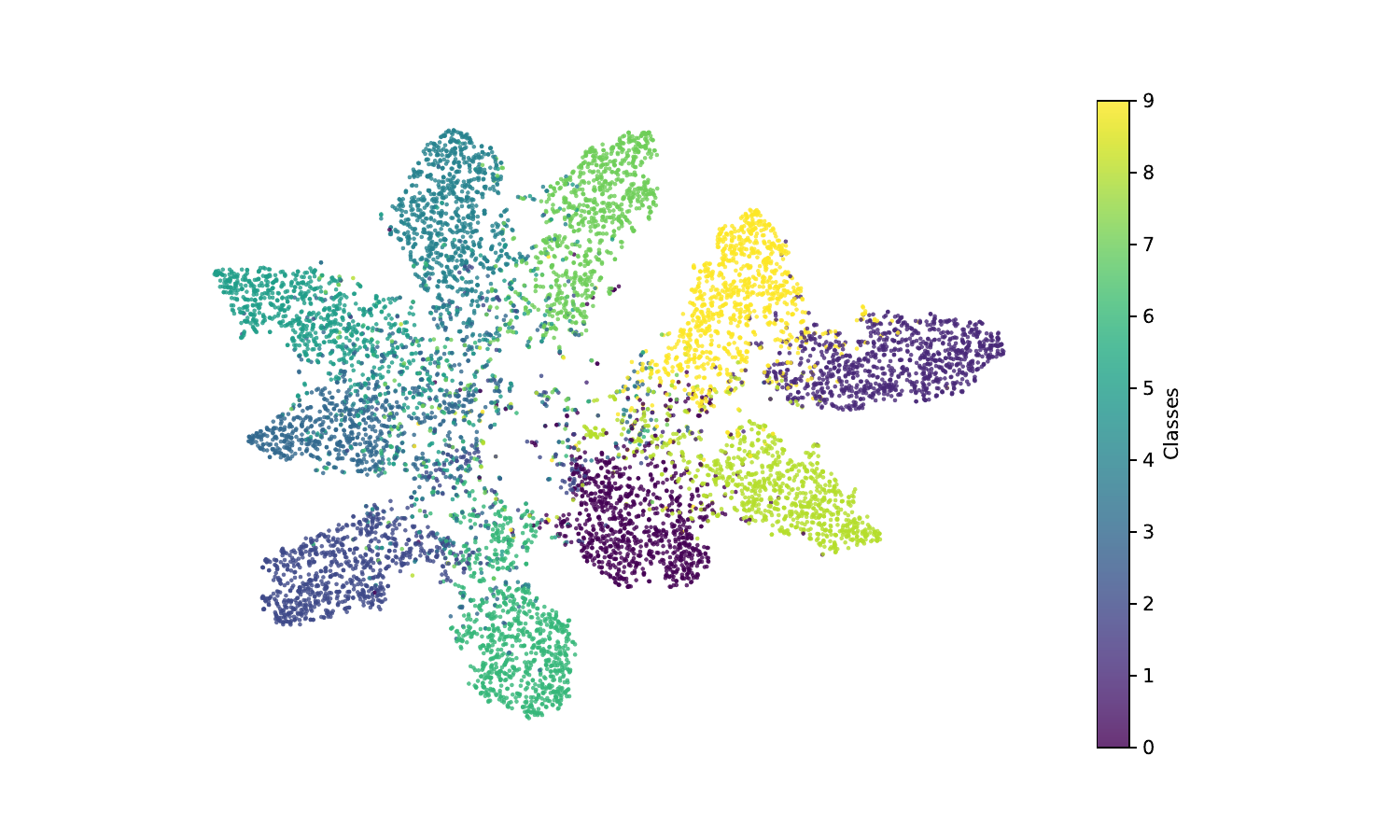}
    \end{minipage}


    \begin{minipage}{0.30\textwidth}
        \includegraphics[width=\linewidth]{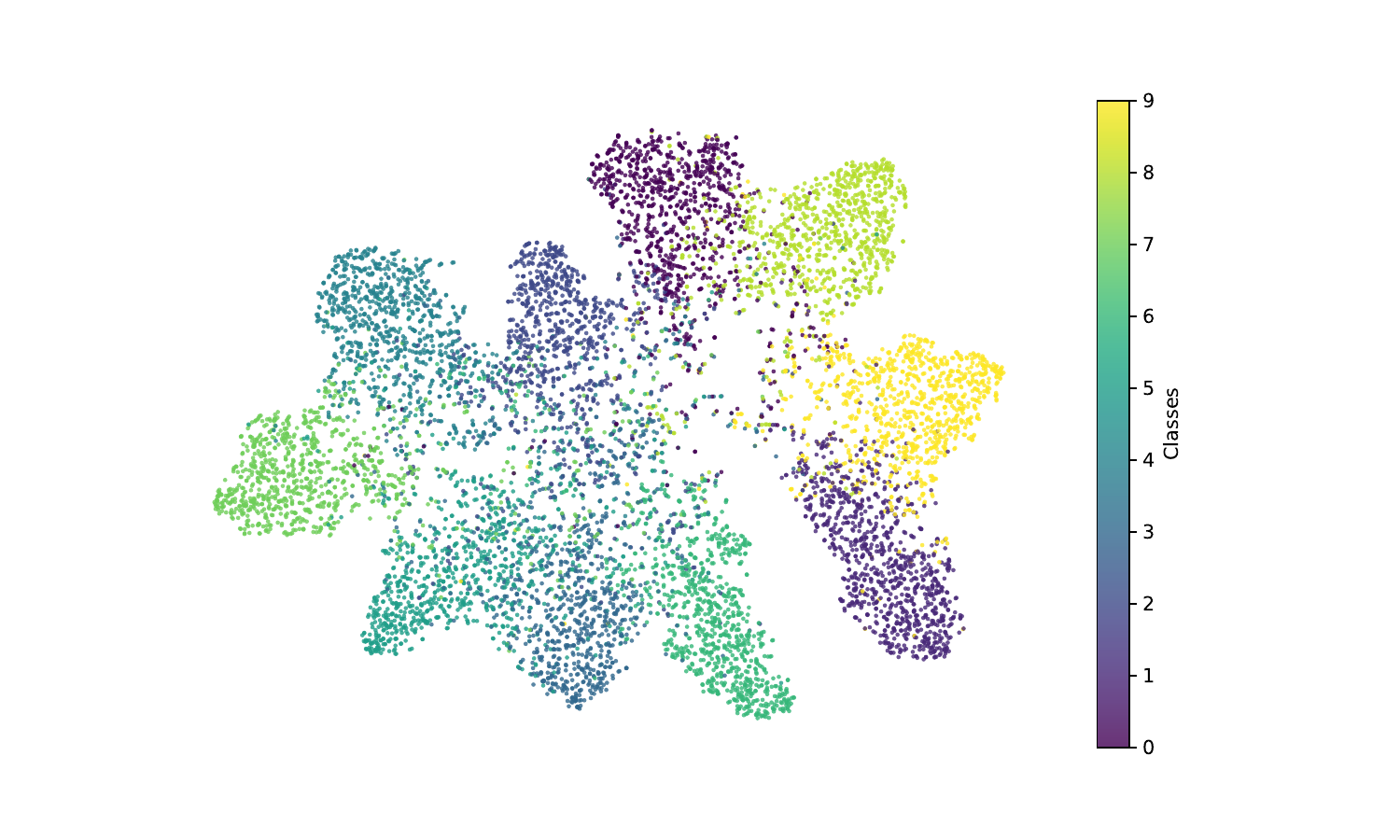}
        \caption*{Uniform}
    \end{minipage}%
    \hspace{0.1cm}
    \begin{minipage}{0.30\textwidth}
        \includegraphics[width=\linewidth]{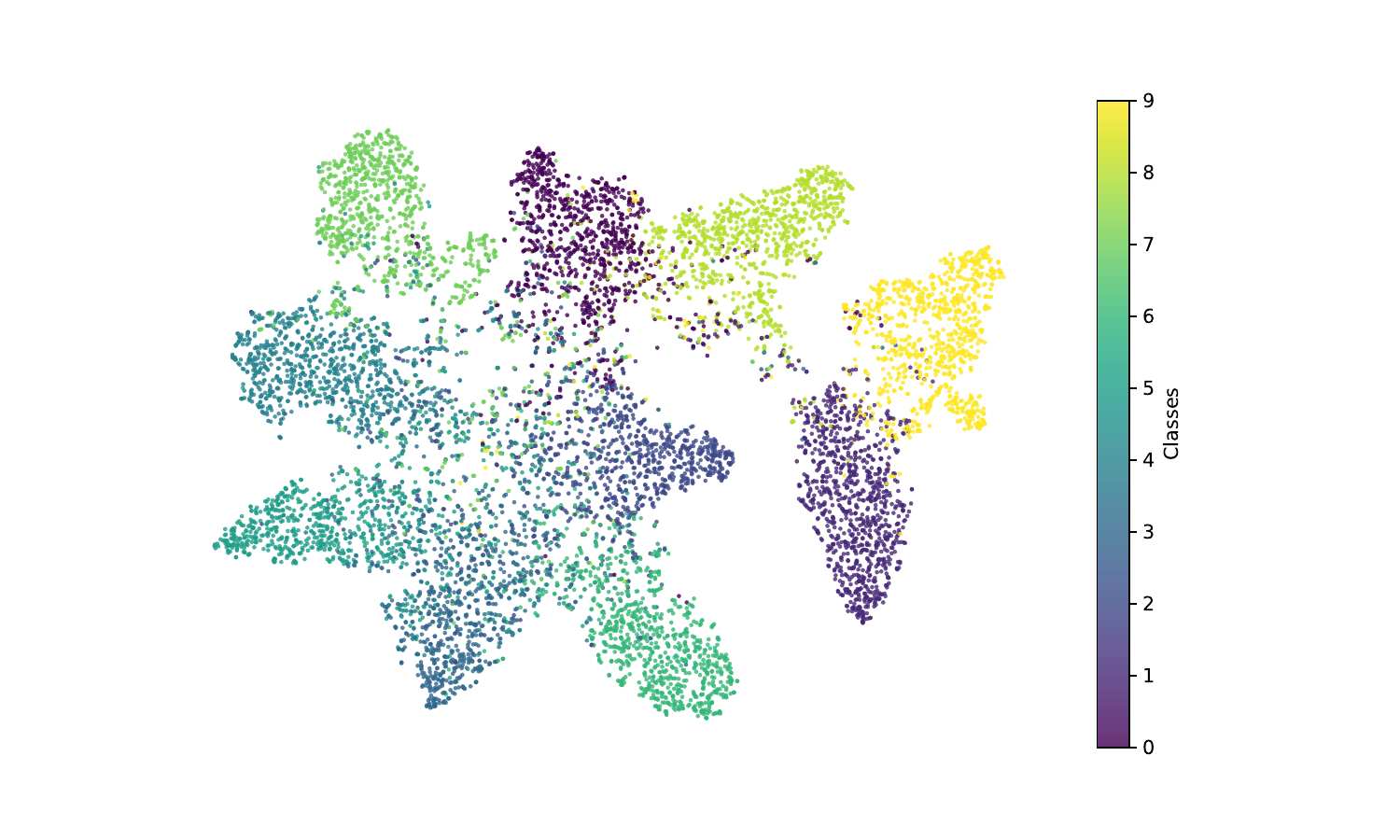}
        \caption*{Inverse}
    \end{minipage}%
    \hspace{0.1cm}
    \begin{minipage}{0.30\textwidth}
        \includegraphics[width=\linewidth]{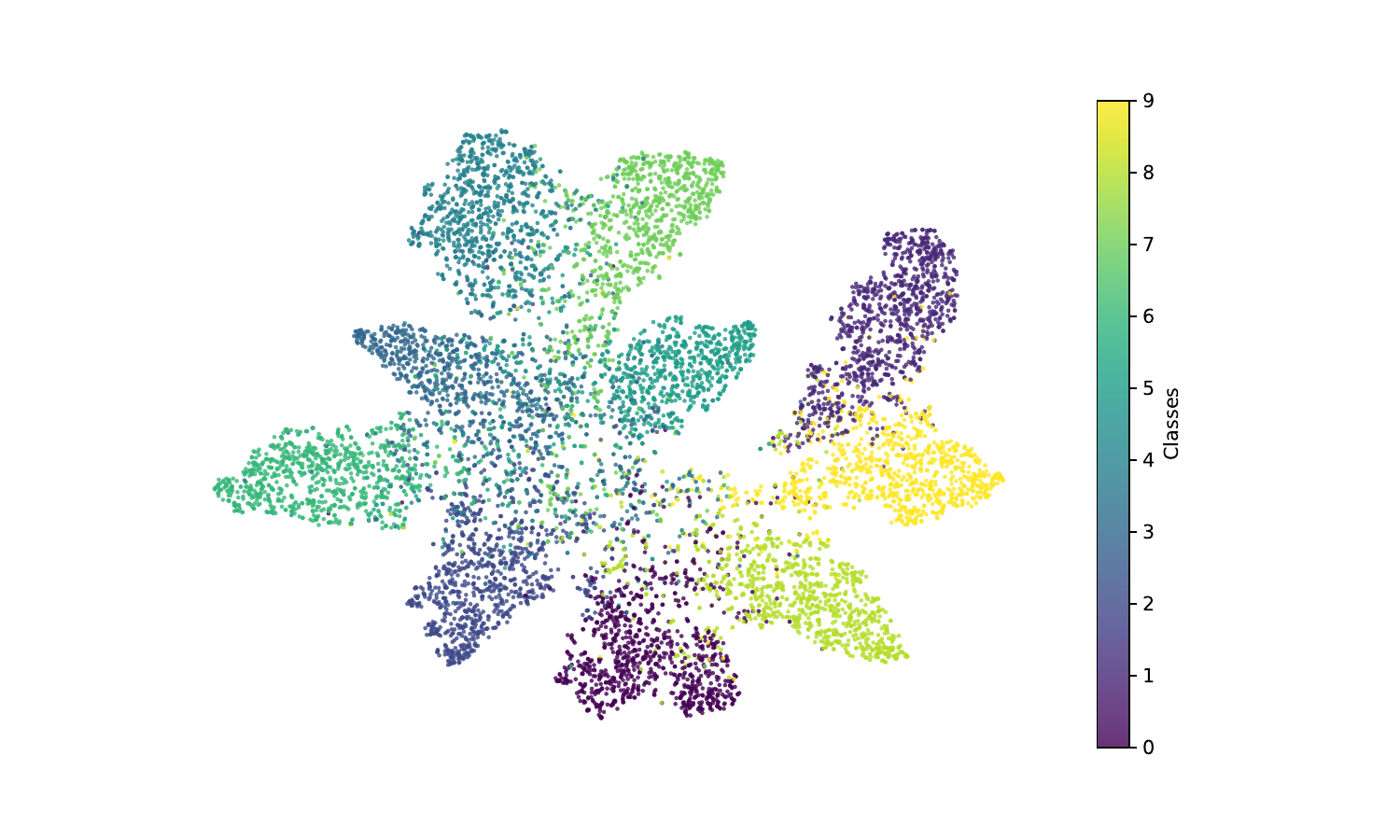}
        \caption*{Consist}
    \end{minipage}

    \caption{t-SNE visualization results on CIFAR10-LT(Top: SC-SSL, bottom: ABC).}
    \label{fig:tsne}
\end{figure*}

\begin{figure*}[!t]
    \centering
    \begin{minipage}{0.28\textwidth}
        \includegraphics[width=\linewidth]{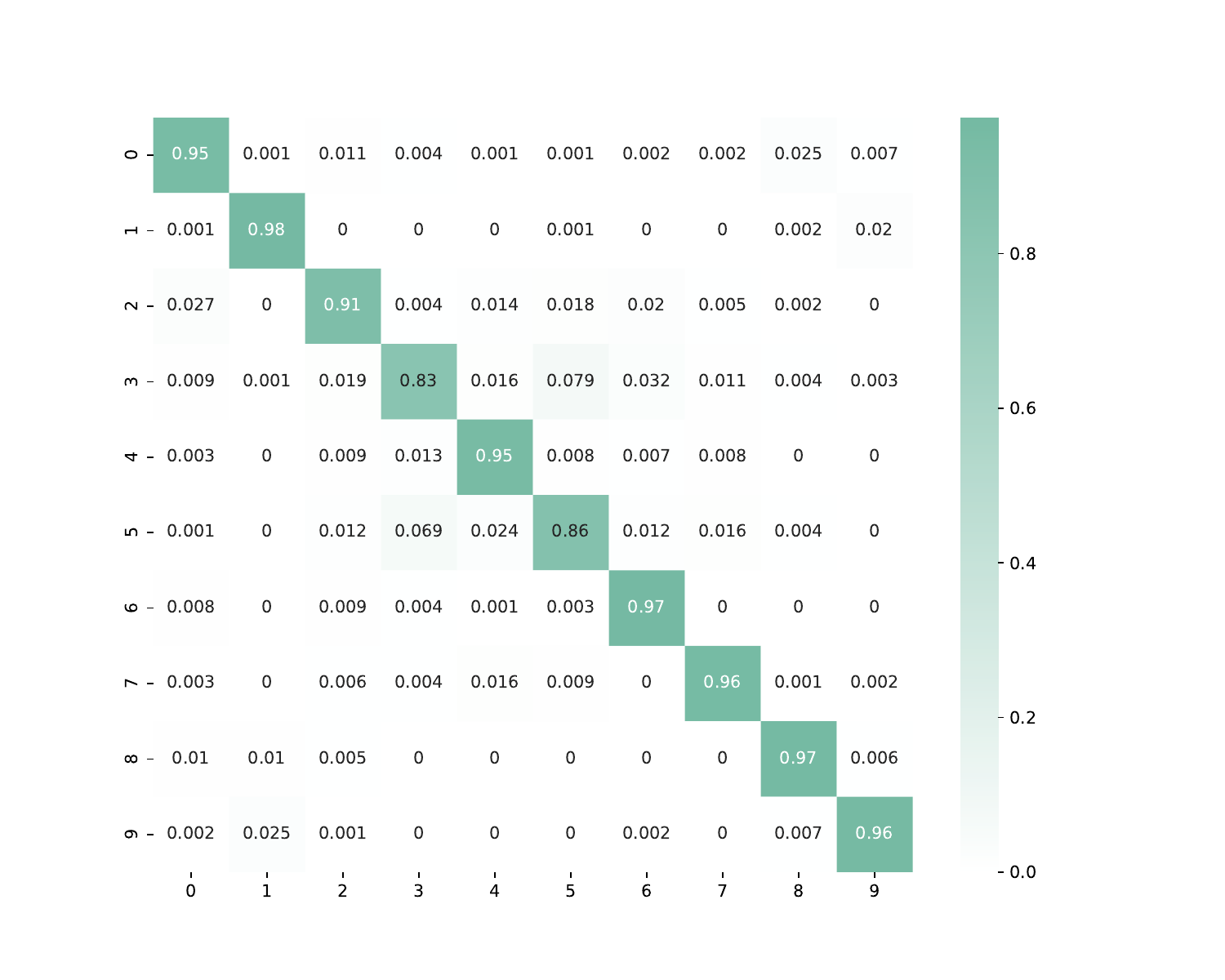}
        \caption*{Uniform}
    \end{minipage}%
    \hspace{0.2cm}
    \begin{minipage}{0.28\textwidth}
        \includegraphics[width=\linewidth]{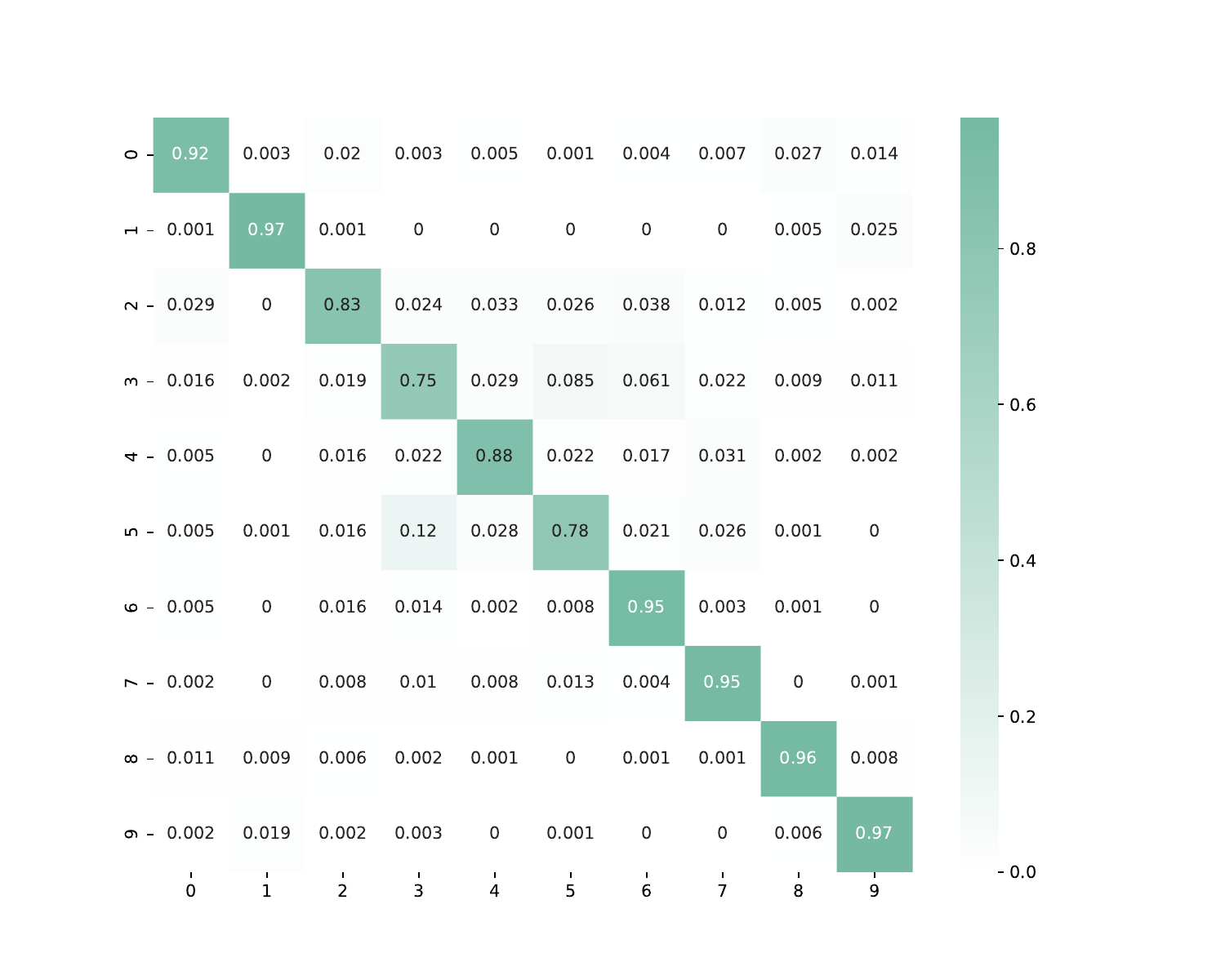}
        \caption*{Inverse}
    \end{minipage}%
    \hspace{0.2cm}
    \begin{minipage}{0.28\textwidth}
        \includegraphics[width=\linewidth]{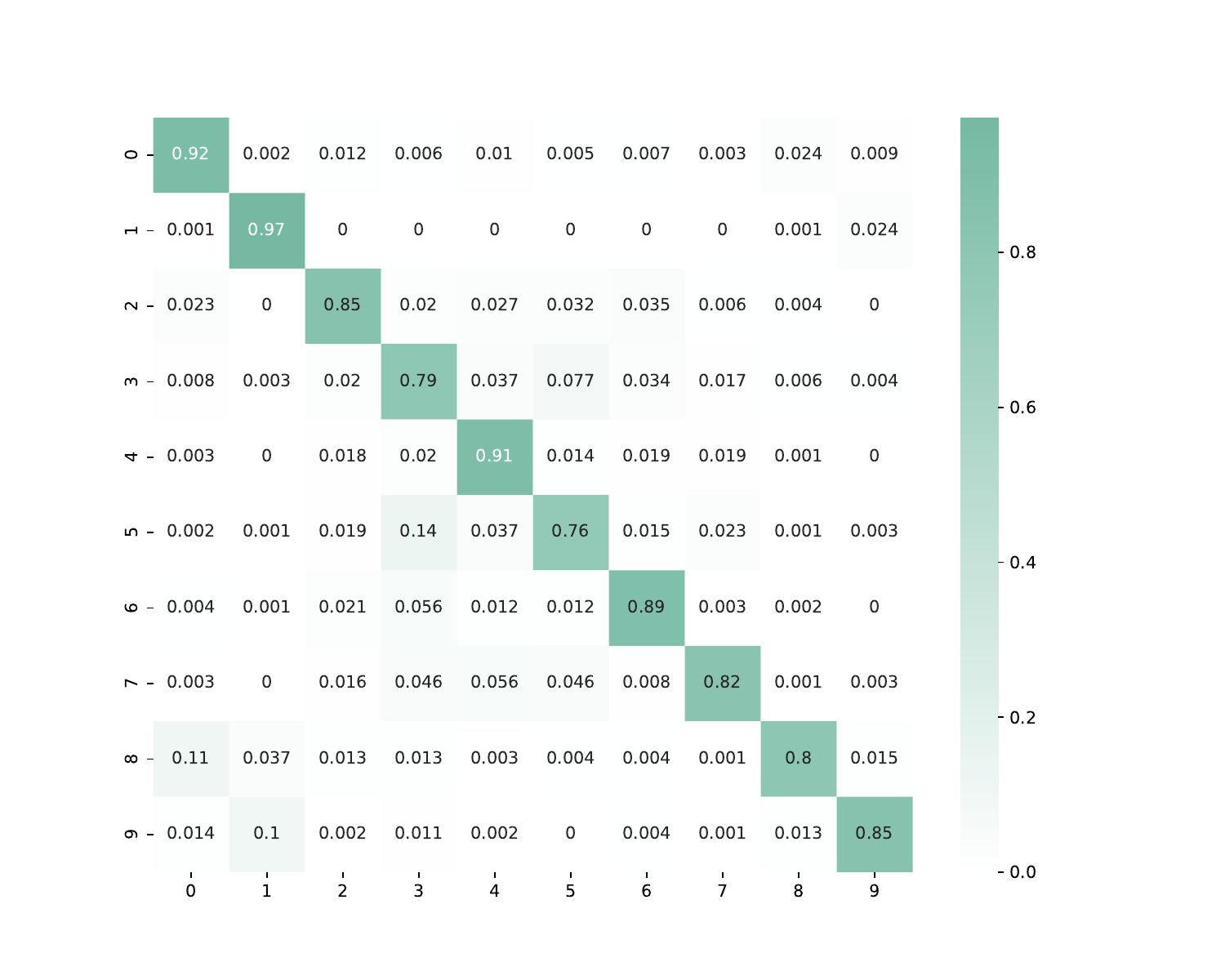}
        \caption*{Consist}
    \end{minipage}

    \caption{Confusion matrix on CIFAR10-LT test set.}
    \label{fig:confusion}
\end{figure*}

\begin{figure*}[!t]
    \centering
    \includegraphics[width=15cm]{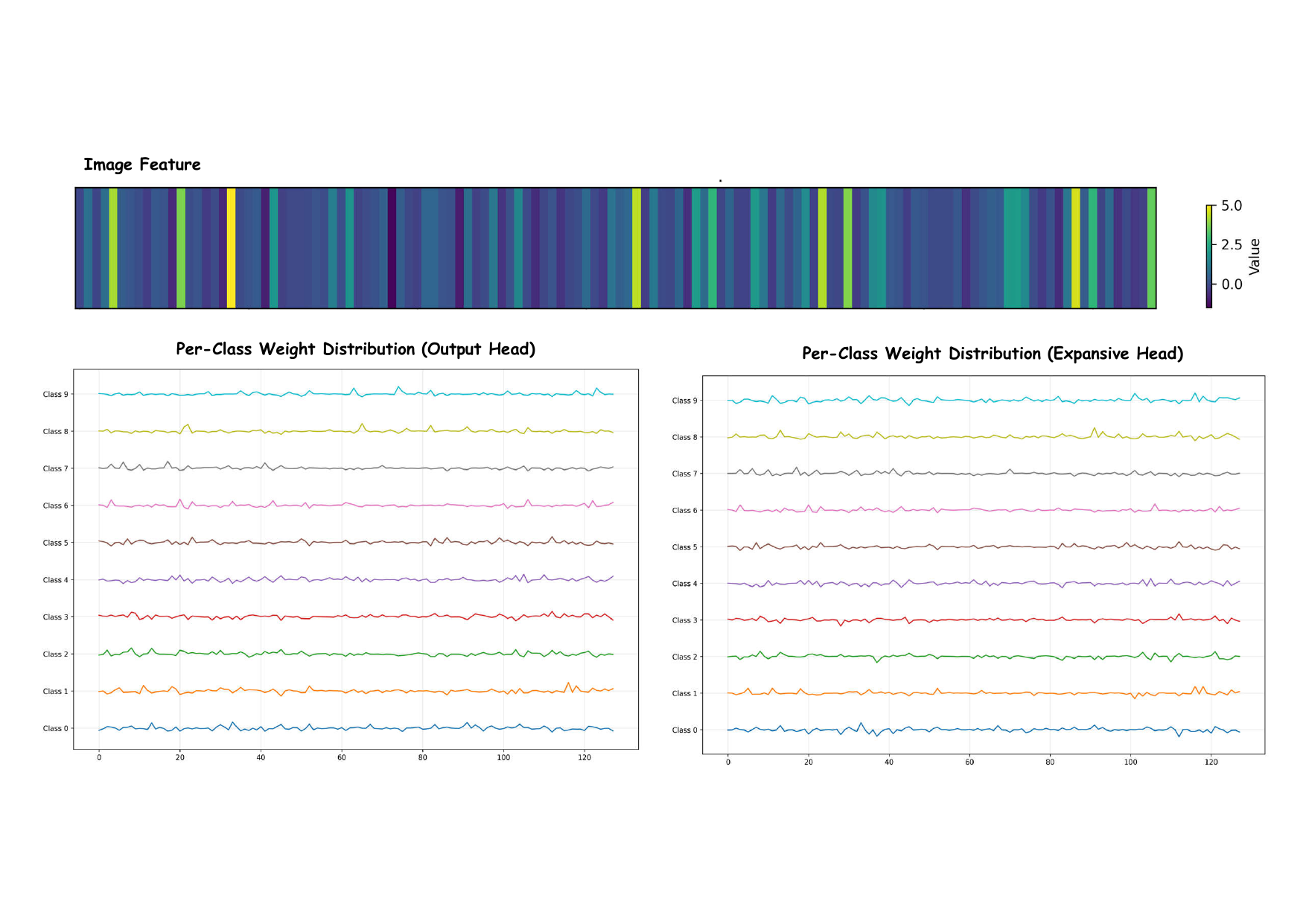}
    \caption{}
    \label{fig:vis_w}
    \vspace{-0.5cm}
\end{figure*}

\section{More Comparison with SOTA Methods}

Table~\ref{tab:reb1} and Table~\ref{tab:reb2} comprehensively compare our method with existing state-of-the-art approaches on four benchmark datasets: CIFAR100, STL10, CIFAR10, and ImageNet127, under various labeled-unlabeled ratios and distribution scenarios.

In Table~\ref{tab:reb1}, we evaluate the performance on CIFAR100 and STL10 with extremely limited labeled data (150 samples) and varying imbalance settings (\(\gamma_l : \gamma_u = 10:10\) and \(20:20\)). FixMatch yields suboptimal performance under these imbalanced conditions. While ADELLO, RECD, and TCBC provide modest improvements, our method—both standalone and combined with BEM—achieves the highest test accuracy across all configurations. Notably, the combination of BEM and our SC-SSL achieves a substantial improvement over BEM+ACR, with up to +2.1\% gain on CIFAR100 and +1.7\% gain on STL10, indicating that our sampling strategy better supports balanced feature learning under long-tailed semi-supervised scenarios.

In Table~\ref{tab:reb2}, we report results on CIFAR10 and ImageNet127. We consider different imbalance directions (e.g., \(\gamma_l : \gamma_u = 100:-100\)) and image resolutions (32×32 and 64×64). Our method consistently outperforms all baselines across these more realistic and challenging setups. Particularly on the large-scale ImageNet127 dataset, our approach improves the performance by a clear margin, achieving 62.9\% and 69.6\% accuracy on 32×32 and 64×64 resolutions respectively—substantially higher than prior methods such as ADELLO, RECD, and TCBC. These results demonstrate that our pseudo-label sampling control not only generalizes well across datasets, but also scales effectively with data complexity and image resolution.

Overall, these tables highlight the effectiveness, robustness, and scalability of our proposed method under both small-data and large-scale long-tailed semi-supervised learning settings.

\section{t-SNE results}

To further validate the effectiveness of SC-SSL, we visualize the learned feature space using t-SNE and report the confusion matrices for model predictions.

As shown in Fig.~\ref{fig:tsne}, the feature embeddings produced by our expansive classifier exhibit a significantly better separation between head and non-head classes compared to the baseline. Notably, samples from minority (non-head) classes are no longer collapsed into ambiguous regions, indicating that SC-SSL successfully mitigates feature dominance by head classes. The improved inter-class margin and intra-class compactness suggest that our sampling-aware feature learning promotes more discriminative and balanced representations across the long-tailed label spectrum.

\section{Confusion Matrics}

Additionally, the confusion matrices in Fig.~\ref{fig:confusion} quantitatively demonstrate the advantage of our approach. Compared to the baseline, our model yields noticeably higher recall for tail classes, as reflected by the stronger diagonal patterns even in the lower-frequency class regions. This aligns with our theoretical motivation that adaptive pseudo-label sampling (enabled by the expansive classifier and optimization bias correction) helps expand decision boundaries for under-represented classes, thereby enhancing robustness. 

Together, these visualizations confirm that SC-SSL not only improves overall classification performance but also achieves more balanced learning under imbalanced semi-supervised settings.

\section{Visualization of Linear Layer and Features}

Figure~\ref{fig:vis_w} presents a qualitative analysis of the learned representations and classifier parameters. The upper part shows the 128-dimensional features projected to 2D via t-SNE, indicating that the learned representations vary significantly across samples and are highly dependent on input augmentation and feature dynamics. The lower part visualizes the absolute magnitude of the linear classifier's weights across different classes. Although some classes exhibit larger weights than others, this pattern is influenced by the current feature distribution during forward propagation rather than the intrinsic data imbalance in the training set. In other words, the absolute weight values alone do not reliably reflect the class frequency or long-tailed structure of the dataset, since weight magnitudes adapt to the scale and dispersion of the input features. This motivates our use of the bias term as a more stable proxy for optimization-induced imbalance, independent of feature variation.


\ifreproStandalone
\end{document}
\fi
\end{document}